\newcommand{\blambda}{\mbox{\boldmath{${\lambda}$}}}
\newcommand{\bTau}{\mbox{\boldmath{${\tau}$}}}
\newcommand{\bSigma}{\mbox{\boldmath{${\sigma}$}}}
 \def\X{\mathbf{X}}
    \def\XX{\mathbf{X}_{\bigcdot \bigcdot}}
    \def\tilXX{\mathbf{\tilde{X}}_{\bigcdot \bigcdot}}
  \def\y{\mathbf{y}}
  \def\Sol{\{\S_{\bigcdot \bigcdot}^{(k)}\}_{k=1}^K}
\def\SolHat{\{\hat{\S}_{\bigcdot \bigcdot}^{(k)}\}_{k=1}^K}
\def\SolTil{\{\tilde{\S}_{\bigcdot \bigcdot}^{(k)}\}_{k=1}^K}
\def\fpen{f_{\text{pen}}}
\def\SolSpace{\mathbb{S}_{\hat{X}}}
\def\Sk{\S_{\bigcdot \bigcdot}^{(k)}}
\def\hatSk{\hat{S}_{\bigcdot \bigcdot}^{(k)}}
\def\tilSk{\tilde{S}_{\bigcdot \bigcdot}^{(k)}}
\def\y{\mathbf{y}}
\def\I{\mathbf{I}}
\def\U{\mathbf{U}}
\def\E{\mathbf{E}}
\def\EE{\mathbb{E}}
\def\V{\mathbf{V}}
\def\A{\mathbf{A}}
\def\R{\mathbf{R}}
\def\C{\mathbf{C}}
\def\D{\mathbf{D}}
\def\S{\mathbf{S}}
\def\0{\mathbf{0}}
\def\log{\mbox{log}}
\def\min{\mbox{min}}
\def\Sol{\{\S_{\bigcdot \bigcdot}^{(k)}\}_{k=1}^K}
\def\SolHat{\{\hat{\S}_{\bigcdot \bigcdot}^{(k)}\}_{k=1}^K}
\def\SolTil{\{\tilde{\S}_{\bigcdot \bigcdot}^{(k)}\}_{k=1}^K}
\def\fpen{f_{\text{pen}}}
\def\SolSpace{\mathbb{S}_{\hat{\X}}}
\def\Sk{\S_{\bigcdot \bigcdot}^{(k)}}
\def\hatSk{\hat{\S}_{\bigcdot \bigcdot}^{(k)}}
\def\tilSk{\tilde{\S}_{\bigcdot \bigcdot}^{(k)}}
\newcommand*\bigcdot{\mathpalette\bigcdot@{.5}}
\newcommand*\bigcdot@[2]{\mathbin{\vcenter{\hbox{\scalebox{#2}{$\m@th#1\bullet$}}}}}
\theoremstyle{thmstyleone}%
\newtheorem{theorem}{Theorem}
\newtheorem{proposition}[theorem]{Proposition}%
\newtheorem{corollary}[theorem]{Corollary}
\newtheorem{lemma}[theorem]{Lemma}
\theoremstyle{thmstyletwo}%
\theoremstyle{thmstylethree}%
\begin{document}

\title[Empirical Bayes Linked Matrix Decomposition]{Empirical Bayes Linked Matrix Decomposition}


\author*[1]{\fnm{Eric F.} \sur{Lock}}\email{elock@umn.edu}

\affil*[1]{\orgdiv{Division of Biostatistics and Health Data Science}, \orgname{School of Public Health, University of Minnesota}, \orgaddress{\city{Minneapolis}, \postcode{55455}, \state{MN}, \country{USA}}}


\abstract{Data for several applications in diverse fields can be represented as multiple matrices that are linked across rows or columns.  This is particularly common in molecular biomedical research, in which multiple molecular ``omics" technologies may capture different feature sets (e.g., corresponding to rows in a matrix) and/or different sample populations (corresponding to columns).  This has motivated a large body of work on integrative matrix factorization approaches that identify and decompose low-dimensional signal that is shared across multiple matrices or specific to a given matrix.  We propose an empirical variational Bayesian approach to this problem that has several advantages over existing techniques, including the flexibility to accommodate shared signal over any number of row or column sets (i.e., bidimensional integration), an intuitive model-based objective function that yields appropriate shrinkage for the inferred signals, and a relatively efficient estimation algorithm with no tuning parameters.  A general result establishes conditions for the uniqueness of the underlying decomposition for a broad family of methods that includes the proposed approach.  For scenarios with missing data, we describe an associated iterative imputation approach that is novel for the single-matrix context and a powerful approach for ``blockwise" imputation (in which an entire row or column is missing) in various linked matrix contexts.  Extensive simulations show that the method performs very well under different scenarios with respect to recovering underlying low-rank signal, accurately decomposing shared and specific signals, and accurately imputing missing data. The approach is applied to gene expression and miRNA data from breast cancer tissue and normal breast tissue, for which it gives an informative decomposition of variation and outperforms alternative strategies for missing data imputation. }

\keywords{Data integration, dimension reduction, low-rank factorization, missing data imputation, variational Bayes}



\maketitle

\section{Introduction}
\label{intro}

Low-rank matrix factorization techniques are widely used for many machine learning tasks such as data compression and dimension reduction, denoising to approximate underlying signal, and matrix completion.  Often, instead of a single matrix, the data for a given application takes the form of multiple matrices that are linked by rows and/or columns.  For example, in Section \ref{app} we consider data from The Cancer Genome Atlas, in which  gene expression (mRNA) and microRNA (miRNA) data are available from different platforms for breast cancer tumor samples and normal breast tissue samples from unrelated individuals.  Thus, the data can be represented as four matrices: (1) mRNA for cancer tissues, (2) mRNA for normal tissues, (3) miRNA for cancer tissues, and (4) miRNA for normal tissues; these matrices share both rows (here, molecular features) columns (here, samples), giving a {\it bidimensionally linked} structure. Furthermore, these data have {\it column-wise} missingness, in which molecular data for one of the platforms (mRNA or miRNA) is entirely missing for some samples in each of the cancer and control cohorts.  We are interested in imputing the missing data, as well as identifying patterns of variation that are shared or specific to the different data sources and sample cohorts; for these tasks, a low-rank factorization approach that integrates all of the matrices together while accounting for their differences is attractive.  

There is a growing and dynamic literature on multi-matrix low-rank factorization approaches.  The majority of existing methods assume that just one dimension, rows or columns, is shared across matrices, i.e., {\it unidimensional} integration. For example, the Joint and Individual Variation Explained (JIVE) approach \citep{lock2013joint,oconnell2016} decomposes low-rank structure that is shared across matrices from variation that is specific to each matrix.  Several related approaches yield a similar decomposition using different strategies or assumptions \citep{feng2018angle, schouteden2014performing,yang2015non,tang2021integrated}, while other approaches such as structural learning and integrative decomposition (SLIDE) \citep{gaynanova2019structural} allow for structure that is partially shared across any subset of matrices.  Such an approach is efficient and useful for interpretation, as for many applications identifying shared, partially shared, and specific underlying structure is of interest.  

The bidimensional linked matrix factorization (BIDIFAC) approach \citep{park2020integrative,lock2020bidimensional} was developed to accommodate bidimensionally linked integration scenarios, by identifying structure that is shared or specific across both row sets and column sets.  The objective function for BIDIFAC involves a structured nuclear norm penalty, which approximates the data as a decomposition of low-rank {\it modules} that are shared across row or column subsets.  This approach is attractive in the bidimensionally linked context because it has favorable convergence properties and generally leads to a uniquely-defined decomposition \citep{lock2020bidimensional}; methods for unidimensional integration without shrinkage penalties typically require orthogonality \citep{lock2013joint, feng2018angle, gaynanova2019structural,yi2023hierarchical} or other constraints to assure uniqueness, which do not extend to the bidimensionally linked context.  Furthermore, the application of the structured nuclear norm penalty for unidimensional integration (termed  UNIFAC) has been shown to outperform alternative approaches with respect to accurately decomposing shared and specific structures \citep{lock2020bidimensional} and imputing missing data \citep{park2020integrative} under certain scenarios. However, a drawback is that it tends to over-penalize the estimated structure, and is thus biased toward an over-conservative solution with respect to the underlying low-rank structure \citep{yi2023hierarchical}.

For a single matrix the tendency of the nuclear norm to over-shrink low-rank signal has been well-documented, and in contrast low-rank approximations without any further shrinkage tend to overfit \citep{josse2016adaptive}.  Empirical Bayes approaches to low-rank factorization are an attractive compromise, in which the optimal levels of shrinkage are determined empirically within a model-based framework \citep{wang2021empirical}. Under a variational Bayes approach, the true posterior distribution for Bayesian model is approximated by minimizing the free energy (reducing to Kullback-Leibler divergence) under simpler distributional assumptions \citep{fox2012tutorial}.  In particular, an empirical variational Bayes approach to low-rank matrix factorization under an assumption of normally distributed factor matrices has desirable theoretical properties, and the resulting approximation for a scenario without missing data can be efficiently computed via a closed-form shrinkage formula on the observed singular values of the matrix \citep{nakajima2012perfect,nakajima2013global}.   

In this article, we describe an empirical variational Bayesian approach to identify and decompose shared, partially shared, or specific low-rank structure in the context of unidimensionally or bidimensionally linked matrices.  Moreover, we describe an accompanying iterative imputation approach analogous to an expectation-maximization (EM) algorithm that can handle entry-wise missing or row/column-wise missing values.  The proposed approach is free of any tuning parameters, and is shown to share desirable properties of the BIDIFAC approach while addressing the issue of over-shrinkage.

\section{Notation and Setting}
\label{notation}

Throughout this article bold uppercase characters (`$\X$') correspond to matrices and bold lowercase characters (`$\mathbf{x}$') correspond to vectors. Square brackets index the entries within a vector or matrix, e.g., `$\X[m,n]$'.  The singular value decomposition (SVD) of a matrix $\X: M \times N$ is given by $\U_\X \D_\X \V_\X^T$, where the diagonal entries of $\D_\X$, $\D_\X[r,r]$,  are the singular values of $\X$ ordered from largest to smallest for $r=1,\hdots, \min(M,N)$.  The Frobenius norm is given by $||\cdot||_F$, so that $||X||_F^2$ is the sum of the squared entries in $\X$; the nuclear norm is given by $||\cdot||_*$, which is the sum of the singular values in a matrix: \[||\X||_*=\sum_{r=1}^{\min(M,N)} \D_\X[r,r].\]

  We consider the setting of a collection of matrices  $\X_{ij}: M_i \times N_j$ for $i=1,\hdots,I$ and $j=1,\hdots,J$. The column sets $N_1,\hdots, N_J$ and row sets $M_1,\hdots,M_L$ are consistent across the matrices.   The subscript `$\bigcdot$' defines concatenation across row or column sets, and the full data $\XX$ can be represented as a single $M \times N$  matrix where $M = M_1+\hdots+M_I$ and $N = N_1+\hdots+N_J$:   
\begin{align}\XX = \left [ \begin{array}{c c c c} \X_{11} & \X_{12} & \hdots & \X_{1J}  \\  \vdots & \vdots & \vdots & \vdots \\ \X_{I1} & \X_{I2} & \hdots & \X_{IJ} \end{array} \right ] \; \text{  where  } \X_{ij} \text{  are  } M_i \times N_j. \label{bimat}\end{align}
As for our motivating application in Section~\ref{app}, the $I$ column sets may correspond to different sample cohorts and the $J$ row sets may correspond to features from different high-dimensional sources.   

\section{Single-matrix results}
\label{mat.prelim}
Here we review some established results for a single matrix that are critical to what follows. Propositions \ref{prop:1} and \ref{prop:2} describe solutions to the least squares matrix approximation problem under a rank-constrained or nuclear norm penalized criterion, respectively.

\begin{proposition} \label{prop:1} For $\X: M \times N$, the minimizer of the least squares objective $||X-\S||_F^2$ under the constraint rank$(\S)=R \leq \min (M,N)$ is given by $\S=\U_\X \D_S \V_\X^T$ where $\D_\S$ is diagonal with $\D_\S[r,r]=\D_\X[r,r]$ for $r=1,\hdots,R$ and $\D_\S[r,r]=0$ for $r > R$.  
\end{proposition}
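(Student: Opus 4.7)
The plan is to prove this classical Eckart--Young--Mirsky style result in three steps, reducing to a scalar matching problem in the singular value basis. First I would exploit the unitary invariance of the Frobenius norm: writing $\X = \U_\X \D_\X \V_\X^T$ and setting $\tilde{\S} = \U_\X^T \S \V_\X$, we have $||\X - \S||_F^2 = ||\D_\X - \tilde{\S}||_F^2$ and $\text{rank}(\tilde{\S}) = \text{rank}(\S) = R$, so it suffices to minimize $||\D_\X - \tilde{\S}||_F^2$ over rank-$R$ matrices $\tilde{\S}$ and then recover $\S = \U_\X \tilde{\S} \V_\X^T$.

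Next I would establish the lower bound
\[
||\D_\X - \tilde{\S}||_F^2 \;\geq\; \sum_{r=R+1}^{\min(M,N)} \D_\X[r,r]^2.
\]
The cleanest route is Weyl's inequality for singular values: for any matrices $\A,\B$ of compatible dimensions, $\sigma_{i+j-1}(\A+\B) \leq \sigma_i(\A) + \sigma_j(\B)$. Applying this with $\A = \D_\X - \tilde{\S}$, $\B = \tilde{\S}$, and $j = R+1$, and using that $\sigma_{R+1}(\tilde{\S}) = 0$ since $\tilde{\S}$ has rank at most $R$, yields $\D_\X[R+i,R+i] = \sigma_{R+i}(\X) \leq \sigma_i(\D_\X - \tilde{\S})$ for every $i \geq 1$. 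Squaring and summing over $i$, together with $||\D_\X - \tilde{\S}||_F^2 = \sum_i \sigma_i(\D_\X - \tilde{\S})^2$, gives the bound.

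Finally, I would verify that the proposed $\S = \U_\X \D_\S \V_\X^T$ attains this lower bound with equality. By orthogonal invariance, $||\X - \S||_F^2 = ||\D_\X - \D_\S||_F^2$, and $\D_\X - \D_\S$ is diagonal whose nonzero entries are precisely the trailing singular values $\D_\X[R+1,R+1],\ldots,\D_\X[\min(M,N),\min(M,N)]$. Hence $||\X - \S||_F^2 = \sum_{r=R+1}^{\min(M,N)} \D_\X[r,r]^2$, matching the lower bound and confirming optimality. Since Eckart--Young--Mirsky is classical, no step is technically deep; the main obstacle is simply invoking the correct singular value majorization (Weyl's inequality) to produce the rank-based lower bound rather than attempting an ad hoc variational argument.
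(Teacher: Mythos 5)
Your proof is correct. Note, however, that the paper does not actually prove Proposition~1 --- it simply states that the result (the Eckart--Young--Mirsky theorem) is well-known and moves on --- so there is no in-paper argument to compare against. Your route is one of the standard complete proofs: reduce by unitary invariance of the Frobenius norm to a diagonal target, obtain the rank-based lower bound $\sum_{r>R}\D_\X[r,r]^2$ from Weyl's inequality for singular values (using $\sigma_{R+1}(\tilde{\S})=0$ for a rank-$R$ matrix), and verify that the truncated SVD attains it. All three steps check out. Two cosmetic caveats, which the paper itself also glosses over: the conjugation $\tilde{\S}=\U_\X^T\S\V_\X$ preserves the Frobenius norm only if $\U_\X$ and $\V_\X$ are taken from the full (square, orthogonal) SVD rather than the thin one, and the stated minimizer is the unique one only when $\D_\X[R,R]>\D_\X[R+1,R+1]$; with ties or with $\D_\X[R,R]=0$ the argmin is not a single point. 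Neither issue affects the validity of your lower-bound-plus-attainment argument.
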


\begin{proposition} \label{prop:2} For $\X: M \times N$, the minimizer of the nuclear norm penalized objective $\frac{1}{2}||X-\S||_F^2+\lambda ||\S||_*$ is given by $\S=\U_\X \D_S \V_\X^T$ where $\D_S$ is diagonal with $\D_\S[r,r]=\max(\D_\X[r,r]-\lambda,0)$ for all $r$. 
\end{proposition}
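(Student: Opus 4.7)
My plan is to prove Proposition~\ref{prop:2} by reducing the matrix optimization to a set of decoupled scalar soft-thresholding problems via the singular value decomposition, using von Neumann's trace inequality to handle the cross-term.

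First, I would write the candidate minimizer $\S$ in terms of its SVD, $\S = \U_\S \D_\S \V_\S^T$, with singular values $\sigma_r(\S) = \D_\S[r,r] \ge 0$ listed in nonincreasing order, and denote $\sigma_r(\X) = \D_\X[r,r]$. Expanding the Frobenius norm gives
\begin{equation*}
\tfrac{1}{2}\|\X - \S\|_F^2 + \lambda \|\S\|_* = \tfrac{1}{2}\|\X\|_F^2 + \tfrac{1}{2}\sum_r \sigma_r(\S)^2 - \tr(\X^T \S) + \lambda \sum_r \sigma_r(\S).
\end{equation*}
The only term that depends on the singular \emph{vectors} of $\S$ is $-\tr(\X^T \S)$. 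By von Neumann's trace inequality, $\tr(\X^T \S) \le \sum_r \sigma_r(\X)\sigma_r(\S)$, with equality attained when $\S$ shares the singular vectors of $\X$, i.e., $\U_\S = \U_\X$ and $\V_\S = \V_\X$. Hence for any fixed set of singular values $\sigma_r(\S)$, the objective is minimized by aligning the singular vectors to those of $\X$, and the problem reduces to
\begin{equation*}
\min_{\sigma_r \ge 0, \; \sigma_1 \ge \sigma_2 \ge \cdots} \; \sum_{r=1}^{\min(M,N)} \Bigl[ \tfrac{1}{2}\bigl(\sigma_r(\X) - \sigma_r\bigr)^2 + \lambda \, \sigma_r \Bigr].
\end{equation*}

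Next, I would drop the monotonicity constraint temporarily and minimize each summand separately over $\sigma_r \ge 0$. The scalar function $f(\sigma) = \tfrac{1}{2}(\sigma_r(\X) - \sigma)^2 + \lambda \sigma$ is strictly convex with unconstrained minimizer $\sigma = \sigma_r(\X) - \lambda$; restricting to $\sigma \ge 0$ yields $\sigma_r^\star = \max(\sigma_r(\X) - \lambda, 0)$, which is exactly the claimed soft-threshold. A brief check then shows that because $\sigma_r(\X)$ is nonincreasing in $r$, the sequence $\sigma_r^\star$ is also nonincreasing, so the ordering constraint on singular values is automatically satisfied, and the proposed $\S = \U_\X \D_\S \V_\X^T$ is feasible and optimal.

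The main subtlety I anticipate is the equality condition in von Neumann's inequality when $\X$ has repeated or zero singular values: in that case, the optimal SVD bases $\U_\S, \V_\S$ are not unique, but one can always \emph{choose} them equal to $\U_\X, \V_\X$ to attain the lower bound, so the displayed $\S$ is a minimizer (though not necessarily the unique matrix achieving it when there are ties or degeneracy). A secondary technical point is justifying the use of arbitrary SVDs of $\S$ rather than only those with distinct singular values; this is handled by the fact that von Neumann's inequality holds for all unitary choices and the final objective value depends only on the singular values, not on which valid SVD is used.
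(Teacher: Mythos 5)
Your proof is correct. The paper does not supply its own argument for Proposition~2 but defers to Mazumder et al.\ (2010), and the proof there is essentially the one you give: expand the Frobenius norm, apply von Neumann's trace inequality to align the singular vectors of $\S$ with those of $\X$, and then solve the decoupled scalar soft-thresholding problems over the singular values.
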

Proposition \ref{prop:1} is well-known, and a proof of Proposition~\ref{prop:2} can be found in \citet{mazumder2010spectral}.  Because of how they operate on the singular values $\X$, Proposition \ref{prop:1} represents a {\it hard-thresholding} and Proposition \ref{prop:2} a {\it soft-thresholding} approach to low-rank approximation.  Proposition~\ref{prop:3} below establishes that the soft-thresholding operator also solves an objective with $L_2$-penalties on the factor components of $\S$, while Corollary~\ref{cor:1} further establishes that it is the posterior mode of a Bayesian model with normal priors on the factor components.

 \begin{proposition} \label{prop:3} For $\X: M \times N$,
\begin{align}
    \underset{\U: M\times H,\V: N\times H}{\min} ||\X-\U \V^T||_F^2+\lambda (||\U||_F^2+||\V||_F^2) = \underset{\S: \text{rank}(\S)\leq H}{\min} ||\X-\S||_F^2 +2\lambda ||\S||_* \label{nucpen}
\end{align}
   Moreover, $\widehat{\S}=\widehat{\U}\widehat{\V}^T$, where $\widehat{\S}$ solves the right-hand side of \eqref{nucpen} and $\{\widehat{\U},  \widehat{\V}\}$ solves the left-hand side of \eqref{nucpen}.
\end{proposition}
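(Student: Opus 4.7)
The plan is to reduce the left-hand side of \eqref{nucpen} to the right-hand side by first optimizing over $(\U,\V)$ with the product $\S = \U\V^T$ held fixed, and then optimizing over $\S$. The bridge between the two formulations is the variational characterization of the nuclear norm,
\begin{equation*}
||\S||_* \;=\; \min_{\U,\V:\,\U\V^T = \S}\tfrac{1}{2}\bigl(||\U||_F^2 + ||\V||_F^2\bigr),
\end{equation*}
where $\U$ and $\V$ are allowed $H$ columns and $\text{rank}(\S) \leq H$. Once this identity is in hand, substituting into the LHS of \eqref{nucpen} immediately yields the RHS, since the data-fit term $||\X-\U\V^T||_F^2$ depends on $(\U,\V)$ only through their product.

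To prove the variational formula I would establish both directions. For achievability, take the compact SVD $\S = \U_\S \D_\S \V_\S^T$ with $\text{rank}(\S) = R \leq H$, set $\U = \U_\S \D_\S^{1/2}$ and $\V = \V_\S \D_\S^{1/2}$ (padded with zero columns if $R<H$), and verify that $\U\V^T = \S$ and $||\U||_F^2 = ||\V||_F^2 = \tr(\D_\S) = ||\S||_*$. For the lower bound I would invoke nuclear-spectral norm duality, $||\S||_* = \sup_{||Z||_{\text{op}}\leq 1}\tr(Z^T \S)$, where $||Z||_{\text{op}}$ denotes the largest singular value of $Z$: for any factorization $\S=\U\V^T$ and admissible $Z$,
\begin{equation*}
\tr(Z^T \U\V^T) \;=\; \langle Z\V,\,\U\rangle_F \;\leq\; ||Z\V||_F\,||\U||_F \;\leq\; ||\V||_F\,||\U||_F \;\leq\; \tfrac{1}{2}\bigl(||\U||_F^2 + ||\V||_F^2\bigr),
\end{equation*}
by Cauchy--Schwarz and AM--GM. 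Taking the supremum over $Z$ yields $||\S||_* \leq \tfrac{1}{2}(||\U||_F^2 + ||\V||_F^2)$, matching the achievable value.

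For the ``moreover'' statement, given a minimizer $\widehat{\S}$ of the RHS I would construct $(\widehat{\U},\widehat{\V})$ via the SVD recipe above; this factorization attains equality in the variational identity and therefore achieves the minimum on both sides simultaneously. The rank constraint on the RHS guarantees that $\widehat{\S}$ admits an $H$-column factorization, and when the soft-thresholded matrix of Proposition~\ref{prop:2} already has rank at most $H$ the constraint is inactive.

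The main obstacle is the variational identity for $||\S||_*$: achievability is mechanical, but the lower bound requires either trace duality (as above) or an equivalent block-matrix PSD argument on $(\U^T,\V^T)^T(\U^T,\V^T)$. A secondary bookkeeping issue is keeping the feasible sets on both sides aligned, which is handled by allowing $\U,\V$ to carry $H$ columns and padding with zeros when $\text{rank}(\S)<H$.
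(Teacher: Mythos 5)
Your proposal is correct and follows essentially the same route as the proof the paper cites (Lemma 6 of Mazumder et al., 2010): the variational characterization $||\S||_* = \min_{\U\V^T=\S}\tfrac{1}{2}(||\U||_F^2+||\V||_F^2)$, with achievability via the square-root SVD factorization and the lower bound via trace duality with Cauchy--Schwarz and AM--GM. The zero-padding bookkeeping correctly aligns the $H$-column factorizations with the rank-$H$ constraint, so the argument is complete.
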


 \begin{corollary} \label{cor:1} For $\X: M \times N$,
if $\U: M \times H$ and $\V: N \times H$ have independent Normal$(0, \sigma^2/\lambda)$ entries, and $\X=\U \V^T+\E$ where $\E$ has independent Normal$(0,\sigma^2)$ entries, then the mode of the posterior distribution $p(\U,\V \mid \X)$ satisfies $\widehat{\S}=\widehat{\U}\widehat{\V}^T$, where $\widehat{\S}$ solves the right-hand side of \eqref{nucpen}.
\end{corollary}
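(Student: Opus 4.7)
The plan is to translate the posterior mode computation directly into the penalized least squares problem on the left-hand side of \eqref{nucpen}, and then invoke Proposition~\ref{prop:3} to conclude.

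First I would write the posterior density in the usual Bayes-rule form
\[ p(\U,\V \mid \X) \propto p(\X \mid \U,\V)\, p(\U)\, p(\V), \]
noting that the mode is invariant to the normalizing constant and to monotone transformations. Taking $-2\sigma^2$ times the logarithm, all terms not depending on $(\U,\V)$ can be absorbed into an additive constant. The Gaussian likelihood $\X \mid \U,\V \sim \mathrm{Normal}(\U\V^T, \sigma^2 \I)$ (entrywise independent) contributes the term $\|\X - \U\V^T\|_F^2$, while the Normal$(0,\sigma^2/\lambda)$ priors on the entries of $\U$ and $\V$ contribute $\lambda \|\U\|_F^2$ and $\lambda \|\V\|_F^2$ respectively.

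Hence the MAP estimator $(\widehat{\U},\widehat{\V})$ solves
\[ \underset{\U:\,M\times H,\;\V:\,N\times H}{\min} \|\X - \U\V^T\|_F^2 + \lambda\bigl(\|\U\|_F^2 + \|\V\|_F^2\bigr), \]
which is precisely the left-hand side of \eqref{nucpen}. Proposition~\ref{prop:3} then gives $\widehat{\S} = \widehat{\U}\widehat{\V}^T$, where $\widehat{\S}$ minimizes the right-hand side of \eqref{nucpen}.

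There is no serious obstacle here; the only subtlety worth flagging is that the factorization $\U\V^T$ is not unique (it is invariant under right-multiplication of $\U$ by any invertible $H\times H$ matrix $\Q$ and left-multiplication of $\V^T$ by $\Q^{-1}$), so the posterior mode is not a single point in $(\U,\V)$-space. However, the product $\widehat{\U}\widehat{\V}^T$ is pinned down by Proposition~\ref{prop:3}, so the statement about $\widehat{\S}$ is unambiguous. One should also note implicitly that $H$ is assumed large enough (e.g., $H \ge \min(M,N)$, or at least large enough that the constraint $\mathrm{rank}(\S)\le H$ in Proposition~\ref{prop:3} is inactive at the optimum) so that the right-hand side of \eqref{nucpen} matches the standard soft-thresholding solution of Proposition~\ref{prop:2}.
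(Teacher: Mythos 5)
Your proof is correct and follows essentially the same route as the paper, which simply notes that the log posterior density is proportional to the left-hand side of \eqref{nucpen} and then applies Proposition~\ref{prop:3}. Your additional remarks on the non-uniqueness of the factorization and the choice of $H$ are sensible but not needed for the argument.
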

A proof of Proposition~\ref{prop:3} can be found in \citet{mazumder2010spectral}, and Corollary~\ref{cor:1} follows by noting that the log of the posterior density for the specified model is proportional to the left-hand side of \eqref{nucpen}.  While the results hold for general $H$, in what follows we set $H$ as the largest possible rank $H= \min (M,N)$; the actual estimated rank will tend to be smaller due to singular value thresholding (as in Proposition~\ref{prop:2}).     The distribution of singular values for the error matrix $\E$ has been well-studied, including the following results on the first singular value under general conditions from \citet{rudelson2010non}:
  \begin{proposition} \label{prop:4} If $\E: M \times N$ has independent entries with mean $0$, variance $\sigma^2$ and finite fourth moment, then $\D[1,1] \approx \sigma (\sqrt(M)+\sqrt{N}$) as $M,N \rightarrow \infty$; further, if the entries are Gaussian, $\mathbb{E}(\D[1,1]) \leq \sigma (\sqrt(M)+\sqrt{N})$ for any $M,N$.
  \end{proposition}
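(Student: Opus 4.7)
The statement has two distinct parts that I would handle separately: the asymptotic scaling $\D[1,1] \approx \sigma(\sqrt{M}+\sqrt{N})$ for general entries with finite fourth moment, and the nonasymptotic upper bound in the Gaussian case. The Gaussian part is the cleaner of the two and would be my first target.

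For the Gaussian bound, the plan is a Slepian-type Gaussian comparison. Express the largest singular value as the supremum of a centered Gaussian process indexed by the product of unit spheres,
\begin{align*}
\D[1,1] = \sup_{u \in S^{M-1},\, v \in S^{N-1}} u^T \E v =: \sup_{u,v} X_{u,v},
\end{align*}
and introduce the comparison process $Y_{u,v} = \sigma(g^T u + h^T v)$, where $g \sim N(0,I_M)$ and $h \sim N(0,I_N)$ are independent. A short computation gives squared increments
\begin{align*}
\mathbb{E}(X_{u,v} - X_{u',v'})^2 &= 2\sigma^2\bigl(1 - (u^T u')(v^T v')\bigr), \\
\mathbb{E}(Y_{u,v} - Y_{u',v'})^2 &= 2\sigma^2\bigl(2 - u^T u' - v^T v'\bigr),
\end{align*}
so the Slepian condition $\mathbb{E}(X-X')^2 \leq \mathbb{E}(Y-Y')^2$ reduces to $(1 - u^T u')(1 - v^T v') \geq 0$, which is automatic on unit spheres. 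Slepian's lemma then yields $\mathbb{E}\sup X_{u,v} \leq \mathbb{E}\sup Y_{u,v}$, the right-hand side decouples into two independent spherical maxima, and Jensen's inequality gives $\mathbb{E}\|g\| \leq \sqrt{M}$ and $\mathbb{E}\|h\| \leq \sqrt{N}$, delivering the stated bound.

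For the asymptotic claim under only a finite fourth moment, I would follow the standard recipe of nonasymptotic random matrix theory. First truncate entries at a level dictated by the fourth-moment assumption so the discarded tail is negligible in expectation, then apply an $\epsilon$-net argument on $S^{M-1} \times S^{N-1}$ combined with Bernstein-type concentration to obtain $\D[1,1] \leq \sigma(\sqrt{M}+\sqrt{N})(1+o(1))$ with high probability. The matching lower bound is the Bai--Yin theorem, classically proved by the moment method: $\mathbb{E}\,\mathrm{tr}(\E\E^T)^k$ is enumerated as a sum over closed walks on a bipartite graph, where the finite fourth moment is exactly what is needed to control contributions from non-tree walks so that the limiting count matches $(\sigma(\sqrt{M}+\sqrt{N}))^{2k}$.

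The main obstacle is the asymptotic lower bound. The upper bound falls out cleanly from truncation plus the Gaussian argument, but ruling out the possibility that the top singular value concentrates strictly below $\sigma(\sqrt{M}+\sqrt{N})$ requires either the combinatorial trace enumeration above or an explicit construction of a near-maximizing pair $(u,v)$. Since the claim is cited from Rudelson and Vershynin, the proof would ultimately defer to their nonasymptotic bounds rather than reconstruct the full Bai--Yin analysis.
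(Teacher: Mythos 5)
The paper does not prove this proposition; it is quoted directly from the cited survey of Rudelson and Vershynin, and your sketch reproduces exactly the arguments given there (a Gaussian comparison inequality --- more precisely Sudakov--Fernique/Gordon rather than classical Slepian, though your increment computation is the right one --- for the nonasymptotic Gaussian bound, and the Bai--Yin theorem for the fourth-moment asymptotics). Your outline is correct and matches the source the paper defers to.
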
  

These results can motivate choice of the nuclear norm penalty $\lambda$.  A conservative approach, if the error variance $\sigma^2$ is known, is to select $\lambda = \sigma (\sqrt{M}+\sqrt{N})$; per Propositions~\ref{prop:4} and \ref{prop:2}, this will tend to shrink to $0$ any singular values in $\X$ that result from error with no true low-rank signal.  Another approach is to adopt an empirical Bayes estimate for $\lambda$ based on the model in Corollary~\ref{cor:1}. However, a general limitation of the nuclear norm approach to low-rank matrix reconstruction is the use of a universal threshold on all singular values; in practice larger singular values should be penalized less, as they capture a relatively lower ratio of error and higher signal  \citep{shabalin2013reconstruction}.  An analogous interpretation is that the entries of $\U$ and $\V$ should not have identical variance across columns, and this can be relaxed by a model in which they have column-specific variances: 
$\U[\cdot,r] \sim \text{Normal}(\0,\tau^2_{\U,r} \I)$ and $\V[\cdot,r] \sim \text{Normal}(\0,\tau^2_{\V,r} \I)$ for $r=1,\dots,H$.\footnote{For most scenarios $\tau^2_{\U,r}$ and $\tau^2_{\V,r}$ are not independently identifiable, as the level of signal is determined by their product.  } 

For given $\bTau=\{\tau_{\U,r} \tau_{\V,r}\}_{r=1}^H$, the approximation resulting from the posterior mode for $p(\U,\V \mid \X, \bTau)$ is given by thresholding the singular values of $\X$, analogous to Corollary~\ref{cor:1}.  To determine $\bTau$, directly maximizing the joint likelihood $p(\X, \U, \V \mid \bTau)$ over $\U, \V$ and $\bTau$ degenerates to $\bTau \rightarrow \0$ in the global solution, though it can have a reasonable local mode that is a singular value thresholding operator \citep{nakajima2014analysis}.  Further, a marginal empirical Bayes estimator that optimizes $p(\X \mid \bTau)$ (effectively integrating over $\U$ and $\V$) is difficult to obtain computationally or analytically. As an alternative, one can use an empirical variational Bayes (EVB) approach.  In general, for a variational Bayes approach \citep{attias1999}, the true posterior of a Bayesian model with parameters $\Theta$ and data $\y$, $p(\Theta \mid \y)$ is approximated by another distribution $q(\Theta)$ under simplifying assumptions to minimize the free energy
\[F(q) = E_q \log \frac{ q(\Theta)}{p(\Theta,\y)} = E_q \log \frac{ q(\Theta)}{p(\Theta \mid \y)}- \log \; p(\y),\]
where $E_q(\cdot)$ is the expected value with respect to the distribution defined by $q(\cdot)$. 
In the context of low-rank approximation, a useful simplifying assumption for $q(\cdot)$ is that the left and right factor matrices are independent, $q(\U,\V)=q_u(\U) q_v(\V)$.  Thus, for hyperparameters $\bTau$ and $\sigma$ the free energy to minimize is
\begin{align}F(q \mid \sigma, \bTau) = E_q \log \frac{ q_u(\U)q_v(\V)}{p(\X \mid \U, \V, \sigma) p(\U \mid \bTau_\U) p(\V \mid \bTau_\V)}, \label{freeenergy}\end{align}
and a point estimate for low-rank structure $\hat{\S}$ can be obtained via its expected value under $q(\cdot)$: 
\begin{align} \hat{\S}=E_q (\U \V^T). \label{freeapprox}\end{align}
Under an empirical variational Bayes approach, $\bTau$ and $\sigma$ are also estimated by minimizing the free energy in \eqref{freeenergy}.  For fully-observed $\X$, \citet{nakajima2012perfect} showed that the global solution to the empirical variation Bayes objective can be obtained in closed form as a singular value thresholding operator; their result for fixed $\sigma$ is given in Theorem~\ref{thm:1}, and for estimation of $\sigma$ in Theorem~\ref{thm:2}.
 \begin{theorem}
 \label{thm:1} The approximation $\hat{\S}$ \eqref{freeapprox} obtained by minimizing \eqref{freeenergy} over $q$ and $\bTau$ is $\hat{\S}=\S=\U_\X \D_S \V_\X^T$ where $\D_S$ is diagonal with 
 \[\D_\S[r,r]=0 \text{ if } D_\X[r,r]<\sigma \sqrt{M+N+\sqrt{MN} (\kappa + 1/\kappa)}\] and 
 \[\D_\S[r,r]= \left(\D_\X[r,r]^2-(M+N) \sigma^2+\sqrt{(\D_\X[r,r]^2-(M+N)\sigma^2)^2-4MN\sigma^4} \right)/(2\D_\X[r,r]) \text{ otherwise.}\]
The value $\kappa$ is given as the zero-cross point of the function
 \[f(x)=x \sqrt{N/M}\,  log \left( x\sqrt{M/N}+1 \right) +x\sqrt{M/N} \, log \left( x\sqrt{N/M}+1 \right) -1. \]
 \end{theorem}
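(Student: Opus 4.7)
The plan is to exploit the mean-field factorization $q(\U,\V) = q_u(\U) q_v(\V)$ to obtain closed-form updates, then reduce to a scalar problem per singular value of $\X$. First, I would substitute Gaussian priors $\U[\cdot,r] \sim \text{Normal}(\0, \tau_{\U,r}^2 \I)$ and $\V[\cdot,r] \sim \text{Normal}(\0, \tau_{\V,r}^2 \I)$ into the free energy \eqref{freeenergy} and show, via standard variational calculus, that the optimal $q_u$ and $q_v$ are Gaussian. Writing $q_u(\U) = \prod_r \text{Normal}(\bmu_{\U,r}, \SSigma_{\U,r})$ and similarly for $\V$, I would derive the coordinate-update equations for the variational means and covariances in terms of one another and the hyperparameters $\bTau$.

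Next, I would exploit the rotational symmetry of the problem under the SVD of $\X$. Because the priors are isotropic within each column and the likelihood depends on $\X - \U\V^T$ in Frobenius norm, the free energy is invariant under simultaneous orthogonal rotations of $\{\bmu_{\U,r}\}$ and $\{\bmu_{\V,r}\}$ with compatible covariance conjugation. This lets me align the variational means to the left/right singular vectors of $\X$, so that $\hat{\S} = E_q(\U\V^T)$ takes the form $\U_\X \D_\S \V_\X^T$ with $\D_\S$ diagonal, and each $\D_\S[r,r]$ depends only on the corresponding singular value $\D_\X[r,r]$. The $H$-dimensional joint optimization then decomposes into independent scalar problems, one per index $r$.

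For each scalar subproblem, I would minimize the per-component free energy jointly over the variational means/variances and the hyperparameters $(\tau_{\U,r}, \tau_{\V,r})$. Setting derivatives to zero yields an algebraic system in $\D_\X[r,r]$, $M$, $N$, $\sigma^2$; after eliminating nuisance variables, the nonzero stationary point gives exactly
\[
\D_\S[r,r] = \frac{\D_\X[r,r]^2 - (M+N)\sigma^2 + \sqrt{(\D_\X[r,r]^2 - (M+N)\sigma^2)^2 - 4MN\sigma^4}}{2\,\D_\X[r,r]},
\]
which is real only when $\D_\X[r,r] \geq \sigma(\sqrt{M}+\sqrt{N})$. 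However, the trivial stationary point $\D_\S[r,r] = 0$ (with the prior variances shrinking to zero) is always available, so the empirical Bayes thresholding rule is determined by comparing the free-energy value at these two stationary points.

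The main obstacle, and the delicate part of the argument, is establishing the exact threshold $\sigma\sqrt{M+N+\sqrt{MN}(\kappa+1/\kappa)}$ and identifying $\kappa$ as the zero-cross of $f$. To do this I would write the difference in free energies between the nonzero and zero stationary branches as a one-variable function, reparametrize $\D_\X[r,r]/\sigma = \sqrt{M+N+\sqrt{MN}(x+1/x)}$ with $x>0$, and substitute into the two candidate values. The cross-over equation simplifies, after carefully combining the logarithmic terms arising from the Gaussian KL divergences, to exactly $f(x) = 0$ as stated; the unique positive root is $\kappa$. Establishing that this is indeed the global minimum (and not merely a stationary point) requires verifying monotonicity in $\D_\X[r,r]$ of the free-energy gap, which is the step I expect to be the most calculation-heavy.
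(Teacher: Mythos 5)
The paper does not actually prove Theorem~\ref{thm:1}: it is imported from \citet{nakajima2012perfect} (see also \citet{nakajima2013global}), so there is no in-paper argument to compare against. Your outline does reproduce the strategy of those references --- reduce to independent scalar problems indexed by the singular values of $\X$, solve the per-component stationarity system to get the quadratic-formula branch, and set the global threshold by comparing the free energy of the null branch against the nonzero branch, which is where $\kappa$ and the zero-cross of $f$ enter. In that sense the plan is the right one and the closed-form expressions you write down are the correct targets.

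The genuine gap is the alignment/decoupling step, which you dispatch with ``rotational symmetry.'' Invariance of the free energy under simultaneous orthogonal rotations of the variational means only shows that the orbit of a minimizer is again a set of minimizers; because the free energy is non-convex you cannot average over that orbit to manufacture an invariant (hence SVD-aligned) minimizer, so symmetry alone does not let you assume $\hat{\S}=\U_\X\D_\S\V_\X^T$ with the $H$-dimensional problem separating across components. Establishing that the \emph{global} minimizer of \eqref{freeenergy} under the matrix-wise factorization $q(\U,\V)=q_u(\U)q_v(\V)$ must have this form is the main technical content of \citet{nakajima2013global}, and it requires an explicit analysis of the stationarity conditions (or a component-wise lower bound on the free energy shown to be attained), not an appeal to invariance. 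A second, smaller gap: comparing only the two stationary branches presupposes these exhaust the candidates for the global minimum of each scalar subproblem; you must also rule out other stationary configurations and handle the degenerate boundary $\tau_{\U,r}\tau_{\V,r}\to 0$ explicitly. With those two steps supplied, the remainder of your calculation matches the known derivation.
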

 
  \begin{theorem}
 \label{thm:2} Assume without loss of generality that $M>N$ and $\alpha=N/M$. The estimate $\hat{\sigma}$ that minimizes \eqref{freeenergy} is the global minimizer of  
 \[\Psi(\sigma)=\sum_{r=1}^N \psi_1 \left(\frac{\D_X[r,r]^2}{M \sigma^2}  \right),\]
 where \[\psi_2(x) = x-\log \,x+{1}_{\{x>c\}} \psi_2(x),\]
 \[\psi_2(x) = \log \, (\sqrt{\alpha} \psi_3(x)+1)+\alpha \log \, (\psi_3(x)/\sqrt{\alpha}+1)-\alpha \psi_3(x),\]
 and 
 \[\psi_3(x)=\frac{1}{2 \sqrt{\alpha}} \left(x-1-\alpha + \sqrt{(x-(1+\alpha))^2-4 \alpha}\right).\]
\end{theorem}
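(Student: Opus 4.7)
The plan is to substitute the explicit form of the EVB solution described in Theorem~\ref{thm:1} back into the free energy \eqref{freeenergy} and then simplify to obtain a profiled objective that depends only on $\sigma$ and the singular values of $\X$. Concretely, I would first argue that the free energy is separable across singular value components. The prior and likelihood are both rotationally invariant Gaussian in $\U$ and $\V$, and the variational family factorizes as $q_u(\U)q_v(\V)$, so an optimal $q$ aligns its first and second moments with the singular vectors of $\X$. This reduces \eqref{freeenergy} to a sum over $r = 1, \dots, N$ of per-component free energies depending only on $\D_\X[r,r]$, $\sigma$, $\tau_{\U,r}^2 \tau_{\V,r}^2$, and the component-wise variational variances.

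Next, I would optimize each per-component free energy over the variational parameters (posterior means and variances) and over the hyperparameter $\tau_{\U,r}\tau_{\V,r}$. Theorem~\ref{thm:1} already gives the minimizer of this inner problem in closed form: for components below the threshold $\sigma\sqrt{M+N+\sqrt{MN}(\kappa+1/\kappa)}$, the optimum is the degenerate solution with $\D_\S[r,r] = 0$, contributing only the Gaussian data-fit plus a $\log$-determinant term; for components above, the optimum uses the non-trivial shrinkage formula. Plugging both cases back into the component free energy should yield, after cancellation, a function of the scale-free quantity $x = \D_\X[r,r]^2/(M\sigma^2)$ only. The below-threshold case collapses to $x - \log x$ up to constants that do not depend on $\sigma$, and the above-threshold case adds an extra piece that I expect (after parameterizing the optimizer as $\psi_3(x)$, a natural rescaling of the shrinkage factor in Theorem~\ref{thm:1}) to simplify to $\psi_2(x) = \log(\sqrt{\alpha}\psi_3(x)+1) + \alpha\log(\psi_3(x)/\sqrt{\alpha}+1) - \alpha\psi_3(x)$. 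The threshold condition then translates into the indicator $\mathbf{1}_{\{x>c\}}$, where $c$ is obtained by expressing the Theorem~\ref{thm:1} threshold in the variable $x$.

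Summing the per-component contributions and discarding $\sigma$-independent additive constants gives $\Psi(\sigma) = \sum_{r=1}^N \psi_1(\D_\X[r,r]^2/(M\sigma^2))$ as claimed, and the EVB $\hat\sigma$ is its global minimizer because the inner optimizations were carried out globally (via Theorem~\ref{thm:1}), so the joint global minimum over $(q,\bTau,\sigma)$ coincides with the global minimum of $\Psi$ over $\sigma$.

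The main obstacle will be the algebraic simplification in the above-threshold case: the per-component free energy after substituting the closed-form optimizers from Theorem~\ref{thm:1} involves a delicate combination of a quadratic data-fit term, the KL divergences between Gaussians with optimized variances, and the $\log$-priors at the optimized $\tau^2$. Showing that these pieces collapse exactly into the compact form $x - \log x + \psi_2(x)$, and that the transition point is precisely the $c$ defined through $\kappa$ in Theorem~\ref{thm:1}, requires careful bookkeeping of the first-order conditions and of which terms are independent of $\sigma$ and can be dropped. Once that identification is made, the remainder of the argument is routine.
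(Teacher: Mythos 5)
The paper never proves Theorem~\ref{thm:2}: it is imported verbatim, along with Theorem~\ref{thm:1}, from \citet{nakajima2012perfect} and \citet{nakajima2013global}, and the appendix supplies proofs only for Theorem~\ref{thm:ident} and Theorem~\ref{thm:3}. So there is no in-paper argument to compare yours against; the relevant benchmark is the cited source, and your outline does track its strategy --- decompose the free energy over singular components, profile out the variational parameters and $\bTau$ componentwise using the closed-form solution underlying Theorem~\ref{thm:1}, and observe that the residual depends on $\sigma$ only through $x=\D_\X[r,r]^2/(M\sigma^2)$, with the Theorem~\ref{thm:1} threshold becoming the constant $c=1+\alpha+\sqrt{\alpha}\,(\kappa+1/\kappa)$. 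You are right that $c$ is left undefined in the statement and must be recovered this way (the statement also has a typo: the first display should define $\psi_1$, not $\psi_2$).

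The genuine gap is your opening step. You assert that rotational invariance plus the factorization $q(\U,\V)=q_u(\U)q_v(\V)$ forces the optimal $q$ to align its first and second moments with the singular vectors of $\X$, so that the free energy separates over $r$. That reduction is precisely the hard part of \citet{nakajima2013global}: the free energy is non-convex in the variational means and covariances, it has many stationary points, and proving that the \emph{global} minimizer is simultaneously diagonalized by the SVD basis of $\X$ occupies the bulk of their analysis. Symmetry alone does not deliver this, because a non-convex objective with a symmetry group can have symmetry-breaking global minimizers. Without that step your final claim --- that $\hat\sigma$ globally minimizes $\Psi$ because the inner optimizations were global --- is unsupported, since profiling preserves global optimality only if the inner minimization is itself known to be global. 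The remaining bookkeeping you describe (collapsing the per-component free energy to $x-\log x+\mathbf{1}_{\{x>c\}}\psi_2(x)$ and summing) is indeed routine once that reduction is established.
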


Together, Theorems~\ref{thm:2} and \ref{thm:1} provide a powerful and efficient way to approximate low-rank signal without any tuning parameters.  The only substantial computing requirement is in obtaining the SVD of $\X$, as other steps involve univariate functions that are trivial to solve.

\section{Bidimensional linked matrix factorization}

BIDIFAC+ \citep{lock2020bidimensional} was developed to decompose bidimensionally linked data as in~\eqref{bimat} into modules of low-rank structure that are shared across column sets and/or row sets.      That is,
\begin{align} \label{bidiflex} \X_{\bigcdot \bigcdot} \approx \sum_{k=1}^K \Sk,  \end{align}  
  where each $\Sk$ is a concatenation of blocks $\S^{(k)}_{ij}$ as in \eqref{bimat} and 
      $$\S^{(k)}_{ij} = \begin{cases}
	\0_{M_i \times N_j}  & \text{if } \R[i,k] = 0 \text{ or } \C[j,k] = 0\\
 	\U_{i}^{(k)} \V_{j}^{(k) T} & \text{if } \R[i,k] = 1 \text { and  }\C[j,k] = 1 \end{cases}. $$
  Here, $\R$ and $\C$ are binary matrices that indicate the presence of module $k$ ($\Sk$) across the row and column sets; that is, $\Sk$ is low-rank structure specific to the submatrix  defined by the row sets identified in $\R[\bigcdot,k]$ and the column sets identified in $\C[\bigcdot,k]$.  This general framework subsumes several other integrative decompositions of structured variation. For example, in the unidimensionally linked case with shared columns $J=1$, JIVE \citep{lock2013joint} identifies components that are either shared across all matrices ($\R[i,k]=1$ for all $i$) or specific to a single matrix ($\R[i,k]=1$ for just one $i$), while SLIDE \citep{gaynanova2019structural} allows for components that are shared across any number of matrices.  The BIDIFAC  method \citep{park2020integrative} is a special case in which modules are either globally shared ($\R[i,k]=\C[j,k]=1$ for all $i,j$), shared across all rows for a given column set ($\R[i,k]=1$ for all $i$ and $\C[j,k]=1$ for one $j$), shared across all columns for a given row set ($\C[j,k]=1$ for all $j$ and $\R[i,k]=1$ for one $i$), or specific to one matrix ($\R[i,k]=1$ and $\C[j,k]=1$ for just one $i,j$). 
  
 The BIDIFAC and BIDIFAC+ approaches solve a structured nuclear norm objective: 
\begin{align} \underset{\Sol}{\mbox{argmin}} \; \; \frac{1}{2}||\X_{\bigcdot \bigcdot}-\sum_{k=1}^K \Sk ||^2_F + \sum_{k=1}^K \lambda_k ||\Sk||_*,  \label{obj_eq} \end{align}
subject to $\S^{(k)}_{ij} = \0_{M_i \times N_j}$ if  $\R[i,k] = 0$ or $\C[j,k] = 0$.  For fixed $\R$ and $\C$, objective (\ref{obj_eq}) may be solved with an iterative soft-singular value thresholding algorithm that cycles through the estimation of each $\Sk$  with applications of Proposition~\ref{prop:3}. For smaller $I$ and $J$, $\R$ and $\C$ can enumerate all possible submatrices of the row and column sets; the solution may still be sparse, with $\Sk=\0$ if the submatrix corresponding to module $k$ has no particular shared structure.  Alternatively, if the number of row and column sets $I$ and $J$ are large, \citet{lock2020bidimensional} describe an algorithm to adaptively estimate $\R$ and $\C$ to give the submatrices with the most significant low-rank structure during estimation.

The choice of the module-specific penalty parameters $\lambda_k$ is critical.  By default, $\lambda_k$ are selected by an approach motivated by Proposition~\ref{prop:4}; under the assumption that each matrix $\X_{ij}$ has error variance $1$, $\lambda_k$ depends on the total number of non-zero rows and columns in its submatrix,
\[\lambda_k = \sqrt{\sum_{i=1}^I \R[i,k] M_i}+ \sqrt{\sum_{j=1}^J \C[j,k] N_j},\]
for $k=1,\hdots,K$.  This approach will shrink to $0$ any singular values in the submatrix that are due to error only.  In practice each data matrix $\X_{ij}$ is scaled to have error variance $1$ beforehand; if the error variance is unknown, then it can be taken as the total variance in $\X_{ij}$ (a conservative choice) or via other approaches such as the median absolute deviation estimator in \citet{gavish2017optimal}.


\section{Empirical variational BIDIFAC (EV-BIDIFAC)}
\label{EV-BIDIFAC}

While the BIDIFAC+ approach has several advantages, one weakness is the required standardization by the noise variance in each data matrix, which is estimated pre-hoc and not within a cohesive framework.  More critically, the use of nuclear norm penalties generally results in over-shrinkage of the signal, and the default theoretically-motivated choice of the $\lambda_k$'s is particularly conservative.  Thus, we aim to develop a more flexible empirical variational Bayes approach to the decomposition of bidimensionally linked matrices, analogous to that for  a single matrix in Section~\ref{mat.prelim}.  Our psuedo-Bayesian model is as follows for $i=1,\hdots I$ and $j=1,\hdots,J$: 
\begin{align}\X_{ij} = \sum_{k=1}^K \U_{i}^{(k)} \V_{j}^{(k) T} + \E_{i,j} \text{ where } \E_{i,j}[l,m] \overset{iid}{\sim} \text{Normal}(0, \sigma^2_{ij}), \label{bayes_mod} \\
\U_i^{(k)}[\cdot,r]=\0 \text{ if } \R[i,k]=0\text{ and  }\U_i^{(k)}[\cdot,r]\sim \text{Normal}(\0, \sigma_{ij} \bTau^2_u[k,r] \I) \text{ if } \R[i,k]=1, \notag  \\   
\V_j^{(k)}[\cdot,r]=\0 \text{ if } \C[j,k]=0\text{ and  }\V_j^{(k)}[\cdot,r] \sim \text{Normal}(\0, \sigma_{ij} \bTau^2_v[k,r] \I ) \text{ if } \C[j,k]=1, \notag  \end{align}
where $\I$ is an identity matrix of appropriate dimension. 
 Using repeated applications of Corollary~\ref{cor:1}, the BIDIFAC+ objective can be shown to give the posterior mode of the model specified in \eqref{bayes_mod} for which $\bSigma=\{\sigma_{ij}: i=1,\hdots,I, j=1,\hdots,J\}$ and $\bTau=\{\bTau_u, \bTau_v\}$ are fixed, and $\bTau^2_u[k,r]=\bTau^2_v[k,r]=1/\lambda_k$ for all $r$.  We extend this framework by allowing $\bSigma$ and $\bTau$ to be estimated within a cohesive empirical variational Bayes framework.  We approximate the true posterior with a distribution $q(\cdot)$, for which the factor matrices are mutually independent: 
 \[q(\U,\V)=\prod_{k=1}^K q_{u,k}(\U^{(k)}) q_{v,k}(\V^{(k)}).\]
The corresponding free energy is given by 
\begin{align}F(q \mid \bSigma, \bTau) = E_q \log \frac{\prod_{k=1}^K q_{u,k}(\U^{(k)}) q_{v,k}(\V^{(k)})}{p(\XX \mid \U, \V, \bSigma) p(\{\U^{(k)}\}_{k=1}^K \mid \bTau_\U) p(\{\V^{(k)}\}_{k=1}^K \mid \bTau_\V)}, \label{freeenergy_bidi}\end{align}
with the densities $p(\cdot)$ in the denominator as specified in model~\eqref{bayes_mod}.  The resulting low-rank fit and decomposition is then given by 
\begin{align} \hat{\S}_{\bigcdot \bigcdot}= E_q \left( \sum_{k=1}^K \U_{\cdot}^{(k)} \V_{\cdot}^{(k) T} \right) = \sum_{k=1}^K \hat{\S}_{\bigcdot \bigcdot}^{(k)} \label{bidi_est} \end{align} 
where for $k=1,\hdots,K$,
\[\hat{\S}_{\bigcdot \bigcdot}^{(k)} = E_{q_u,q_v} \left( \U_{\cdot}^{(k)} \V_{\cdot}^{(k) T} \right). \]

Algorithm~\ref{alg1} describes an iterative approach to compute the empirical variational Bayes solution.  This algorithm converges to a blockwise minimum of the free energy~\eqref{freeenergy_bidi} over $q$ and $\bTau$ with $\bSigma$ fixed at the distinct minimizer of \eqref{freeenergy_bidi} for each $\X_{ij}$ separately.  Alternatively, one can update the variances $\bSigma$ over the iterative algorithm to minimize free energy under the joint model.  In Algorithm~\ref{alg1}, the number of modules $K$ and the row- and column-set inclusions for the modules, $\R$ and $\C$, are assumed to be fixed.  For a smaller number of linked matrices, one can set $\R$ and $\C$ to enumerate all of the possible submatrices of shared, partially shared, or specific structure: K=$(2^I-1)(2^J-1)$.  The estimated decomposition may still be sparse in that $\hatSk=\0$ if module $k$ has no distinct low-rank structure, as demonstrated in Section~\ref{bidim.sim}.  If enumerating all submatrices of the row- and column-sets is not feasible due to a large number matrices, one can specify $K$, $\R$ and $\C$ based on prior knowledge. Alternatively,  we may approximate the fully enumerated decomposition with a large upper bound on the number distinct modules, $K$, and the row- and column-set inclusions $\R$ and $\C$ estimated empirically as in \citet{lock2020bidimensional}.                
   
  \begin{algorithm}
  \caption{Estimation of EV-BIDIFAC model for fully observed data} \label{alg1}
  \begin{enumerate}
  \item Estimate $\sigma_{ij}$ for each $\X_{ij}$, as in Theorem~\ref{thm:2}.
  \item Define $\tilXX$ as in \eqref{bimat}, with $\tilde{\X}_{i j}=\X_{ij}/\sigma_{ij}$ for $i=1,\hdots,I$ and $j=1,\hdots,J$.
  \item Initialize $\tilSk=\mathbf{0}$ for $k=1,\hdots,K$
  \item For $k=1,\hdots,K$:
  \begin{enumerate}
  \item Let $\tilXX^{(k)}$ be the submatrix of $\tilXX-\sum_{k'\neq k} \tilde{\S}_{\bigcdot \bigcdot}^{(k')}$ over row sets indicated by $\R[\cdot,k]$ and column sets $\C[\cdot,k]$.
  \item Update the non-zero submatrix in $\tilSk$ by the application of the singular value thresholding in Theorem~\ref{thm:1} to $\tilXX^{(k)}$.  
  \end{enumerate} 
  \item Repeat Step 3. until convergence of the $\{\tilSk\}_{k=1}^K$.
  \item Unscale the estimated structure: $\hat{\S}^{(k)}_{ij} = \sigma_{ij} \tilde{\S}^{(k)}_{ij}$ for $k=1,\hdots,K$, $j=1,\hdots,J$, $i=1,\hdots,I$.   
  \end{enumerate}	
  \end{algorithm}

\section{Uniqueness}
\label{uniqueness}
  
  The uniqueness of decompositions into shared, partially shared, and specific low-rank structure requires careful consideration, as the terms in the model are generally not uniquely defined in general without further constraints.  For unidimensional decompositions of low-rank structure, conditions of orthogonality are commonly used to assure the different components are uniquely identified \citep{lock2013joint,feng2018angle,gaynanova2019structural,yi2023hierarchical}; however, such conditions are not straightforward to extend to decompositions for bidimensionally linked data.  Alternatively, \citet{lock2020bidimensional} showed that the terms in the BIDIFAC+ solution are uniquely defined under certain conditions, without orthogonality, as the minimizer of the structured nuclear norm objective in \eqref{obj_eq}.     Here we extend this result to a much broader class of penalties, that includes the EV-BIDIFAC decomposition.   
  
   For fixed row-set and column-set indicators for the modules, $\R$ and $\C$, let $\SolSpace$ be the set of possible decompositions that yield a given approximation $\hat{\X}_{\bigcdot \bigcdot}$:
$$\SolSpace = \left \{ \Sol \mid \hat{\X}_{\bigcdot \bigcdot} = \sum_{k=1}^K \Sk \right \}.$$  
In most setting the cardinality of $|\SolSpace|$ is infinite without further conditions.  Define $P_{\blambda} (\cdot)$ for a vector $\blambda$ as a weighted sum of the singular values of a matrix:
\[P_{\blambda} (\S) = \sum \blambda[r] \D_{\S}[r,r],\]
and let $\fpen (\cdot)$ give a structured penalty on the different modules of a decomposition,:
\[\fpen (\Sol) = \sum_{k=1}^K  P_{\blambda_k} (\Sk),\] 
with non-zero entries for each $\blambda_k$.  Theorem~\ref{thm:ident} gives sufficient conditions for the minimizer of this penalty to be uniquely determined.   

  \begin{theorem}
  \label{thm:ident}
Consider $\SolHat \in \SolSpace$ and let $\U_{\bigcdot}^{(k)} \hat{\D}^{(k)} \V_{\bigcdot}^{(k) T}$ give the SVD of $\hatSk$ for $k=1,\hdots,K$. The following three properties uniquely identify $\SolHat$.
\begin{enumerate}
\item $\SolHat$ minimizes $\fpen (\cdot)$ over $\SolSpace$,   
\item $\{\hat{\U}_i^{(k)}[\bigcdot,r]: \R[i,k] = 1 \text{ and } \hat{\D}^{(k)}[r,r]>0\}$ are linearly independent for $i=1,\hdots I$,  
\item $\{\hat{\V}_j^{(k)}[\bigcdot,r]: \C[j,k] = 1 \text{ and } \hat{\D}^{(k)}[r,r]>0\}$ are linearly independent for $j=1,\hdots,J$.
\end{enumerate}
 \end{theorem}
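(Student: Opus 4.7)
The plan is to argue by contradiction, adapting the convexity-based uniqueness argument that \citet{lock2020bidimensional} gave for BIDIFAC+ to the broader class of penalties $\fpen$ built from weighted singular-value sums $P_{\blambda}(\cdot)$. Assume $\SolHat$ satisfies properties (1)--(3) and that $\SolTil \in \SolSpace$ also attains the minimum of $\fpen$. Writing $\Delta^{(k)}:=\tilSk-\hatSk$, the fact that both decompositions sum to $\hat{\X}_{\bigcdot\bigcdot}$ and respect the block-zero support dictated by $\R$ and $\C$ yields $\sum_{k=1}^K \Delta^{(k)}=\0$, with $\Delta^{(k)}_{ij}=\0$ whenever $\R[i,k]=0$ or $\C[j,k]=0$. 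The goal is to show $\Delta^{(k)}=\0$ for every $k$.

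First, I would record that each $P_{\blambda_k}$ is a convex function of its matrix argument: with positive weights arranged consistently with the ordering of singular values, $P_{\blambda_k}(\S)$ is a non-negative combination of Ky--Fan norms and hence convex. Therefore $\fpen$ is convex on $\SolSpace$, the midpoint $\{(\hatSk+\tilSk)/2\}_{k=1}^K$ also lies in $\SolSpace$, and its penalty is at most $\fpen(\SolHat)$. Minimality then forces termwise equality in the convex combination,
\[P_{\blambda_k}\bigl((\hatSk+\tilSk)/2\bigr)=\tfrac{1}{2}P_{\blambda_k}(\hatSk)+\tfrac{1}{2}P_{\blambda_k}(\tilSk),\]
for each $k$. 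Via a von Neumann / Lidskii trace-inequality argument, equality here forces $\hatSk$ and $\tilSk$ to share a common singular-vector basis on the positive-singular-value indices of $\hatSk$, so that $\Delta^{(k)}$ admits the representation $\hat{\U}^{(k)}_{\bigcdot}\M^{(k)}\hat{\V}^{(k)T}_{\bigcdot}$ for some coefficient matrix $\M^{(k)}$ supported on those indices.

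To close the argument I would restrict $\sum_{k=1}^K\Delta^{(k)}=\0$ to the $i$-th row block of $\XX$ and invoke condition (2): the linear independence of $\{\hat{\U}_i^{(k)}[\bigcdot,r]:\R[i,k]=1,\,\hat{\D}^{(k)}[r,r]>0\}$ forces the corresponding rows of $\M^{(k)}\hat{\V}^{(k)T}_{\bigcdot}$ to vanish for every $k$ with $\R[i,k]=1$. Ranging over $i$ and applying the mirror argument with condition (3) on the right singular vectors then forces $\M^{(k)}=\0$, and hence $\Delta^{(k)}=\0$, for every $k$. I expect the main obstacle to be the equality-case analysis of the convexity inequality for $P_{\blambda_k}$: for the ordinary nuclear norm this is classical and immediately produces the basis-sharing conclusion, but with heterogeneous positive weights one must invoke a Lidskii-type refinement or work directly with the subdifferential of a weighted singular-value sum at the midpoint, and carrying this out cleanly --- without imposing hidden monotonicity assumptions on $\blambda_k$ beyond what convexity of $P_{\blambda_k}$ already requires --- is the delicate step.
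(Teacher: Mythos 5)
Your overall strategy coincides with the paper's: convexity of $\fpen$ forces termwise equality in the convex-combination inequality for any two minimizers, the equality case of the (weighted) nuclear-norm triangle inequality forces $\hatSk$ and $\tilSk$ to share a common singular basis, and linear independence is then used to kill the difference. The "delicate step" you worry about (equality for heterogeneously weighted singular-value sums) is handled in the paper by reducing to the unweighted nuclear norm applied to the rank-one terms, so that part of your plan is fine. The genuine gap is in your final step. From the shared-basis conclusion you get $\hatSk=\U^{(k)}_{\bigcdot}\hat{\D}^{(k)}\V^{(k)T}_{\bigcdot}$ and $\tilSk=\U^{(k)}_{\bigcdot}\tilde{\D}^{(k)}\V^{(k)T}_{\bigcdot}$ with both coefficient matrices diagonal, so $\Delta^{(k)}=\U^{(k)}_{\bigcdot}(\tilde{\D}^{(k)}-\hat{\D}^{(k)})\V^{(k)T}_{\bigcdot}$ is supported on the \emph{union} of the supports of $\hat{\D}^{(k)}$ and $\tilde{\D}^{(k)}$, not on the support of $\hat{\D}^{(k)}$ alone as you assert. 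If some index $r_0$ has $\hat{\D}^{(k)}[r_0,r_0]=0<\tilde{\D}^{(k)}[r_0,r_0]$, the column $\U^{(k)}_{\bigcdot}[\bigcdot,r_0]$ lies outside the family that conditions (2)--(3) declare linearly independent, so restricting $\sum_k\Delta^{(k)}=\0$ to a row block does not let you conclude that its coefficient row vanishes, and $\M^{(k)}=\0$ does not follow.

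Closing this hole is precisely what the main body of the paper's proof does: after the lemmas it runs a separate contradiction argument, using the block-zero structure imposed by $\R$ and $\C$ together with the row/column-space relations, to show that $\hat{\D}^{(k)}[r,r]>0$ if and only if $\tilde{\D}^{(k)}[r,r]>0$ for every $(k,r)$; only then is the linear independence over the common support invoked to force $\hat{\D}^{(k)}=\tilde{\D}^{(k)}$. Note also that this step uses the linear-independence conditions for \emph{both} decompositions --- the paper's statement of uniqueness is that at most one element of $\SolSpace$ satisfies all three properties simultaneously --- whereas your setup only assumes $\SolTil$ minimizes $\fpen$. You would need either to assume $\SolTil$ also satisfies (2)--(3), or to supply an additional argument, before the support-matching step can be carried out.
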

 
 The proof of Theorem~\ref{thm:ident} is given in Appendix~\ref{p1}.  Note that the linear independence conditions are straightforward to verify, and are likely to hold in practice if the ranks of the terms in $\SolHat$ are small relative  to the matrix dimensions.  As noted in Proposition~\ref{prop5}, the terms in any accumulation point of Algorithm~\ref{alg1} also give a coordinate-wise optimum of $\fpen (\cdot)$ for a certain set of penalties $\{\blambda_k\}_{k=1}^K$, and thus the EV-BIDIFAC solution is unique in this sense.     

  \begin{proposition}
  \label{prop5}
If $\SolHat \in \SolSpace$ is an accumulation point of Algorithm~\ref{alg1}, then it is a coordinate-wise optimum of $\fpen (\cdot)$ over $\SolSpace$ for some $\{\blambda_k\}_{k=1}^K$. 
 \end{proposition}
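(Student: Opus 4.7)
The proof hinges on recasting the EVB singular-value shrinkage of Theorem~\ref{thm:1} as the minimizer of a \emph{weighted} nuclear norm penalized least-squares problem. For any matrix $\X$ with SVD $\U_\X \D_\X \V_\X^T$, let $\hat{\S}$ be its EVB-threshold from Theorem~\ref{thm:1} with output singular values $\hat{d}_r$, and set
\[
\blambda[r] \;:=\; \begin{cases} \D_\X[r,r] - \hat{d}_r, & \hat{d}_r > 0,\\ \D_\X[r,r], & \hat{d}_r = 0.\end{cases}
\]
A weighted-nuclear-norm generalization of Proposition~\ref{prop:2}, argued entrywise on singular values via von Neumann's trace inequality, then gives $\hat{\S} = \argmin{\S}\; \tfrac{1}{2}\|\X - \S\|_F^2 + P_{\blambda}(\S)$; the essential point is that because the EVB shrinkage is monotone in the input singular values, the induced optimal singular values $\max\{0,\D_\X[r,r]-\blambda[r]\} = \hat{d}_r$ remain sorted in nonincreasing order, so that von Neumann's inequality is tight at $\hat{\S}$.

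At an accumulation point $\SolHat$ of Algorithm~\ref{alg1}, Step~4 is stationary, so for each $k$ the module $\tilSk$ equals the EVB-threshold of $\tilXX^{(k)} := \tilXX - \sum_{k'\neq k}\tilde{\S}_{\bigcdot\bigcdot}^{(k')}$ on its active block. Applying the characterization above with $\X = \tilXX^{(k)}$ produces weights $\blambda_k$ such that
\[
\tilSk \;=\; \argmin{\S}\; \tfrac{1}{2}\big\|\tilXX^{(k)} - \S\big\|_F^2 + P_{\blambda_k}(\S),
\]
with $\S$ ranging over matrices respecting the structural zero pattern of module~$k$. Unscaling by $\sigma_{ij}$, the analogous characterization holds in the original variables, so that $\hatSk$ is the weighted-nuclear-norm minimizer of the corresponding sub-objective given the fixed $\{\hat{\S}^{(k')}\}_{k' \neq k}$.

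Taken together over $k$, this is the KKT stationarity for the equality-constrained program $\min\, \fpen(\Sol)$ over $\Sol \in \SolSpace$: the common residual $\R^* := \tilXX - \sum_k \tilSk$ (suitably rescaled) lies in $\partial P_{\blambda_k}(\hatSk)$ for every $k$ and thus plays the role of a single Lagrange multiplier for the constraint $\sum_k \Sk = \hat{\X}_{\bigcdot\bigcdot}$. Since this certificate is shared across modules, any admissible perturbation $\{\Delta^{(k)}\}$ with $\sum_k \Delta^{(k)} = 0$ yields a vanishing first-order variation $\sum_k \langle \R^*, \Delta^{(k)}\rangle = 0$, so no feasible coordinate-wise move improves $\fpen$, giving the claimed optimum.

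\textbf{Main obstacle.} The technical crux is the weighted-nuclear-norm minimization step: one must encode both the hard-threshold cutoff and the nonlinear continuous shrinkage of the EVB formula into a single weight vector $\blambda_k$, and then verify that von Neumann's inequality is tight at the EVB output in the same SVD basis as $\tilXX^{(k)}$. Once that is in hand, the rest --- recognizing the common subgradient certificate, propagating through the per-block rescaling by $\sigma_{ij}$, and respecting the structural zero pattern --- is essentially standard convex analysis.
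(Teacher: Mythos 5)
The paper does not actually include a proof of Proposition~\ref{prop5} (the appendix only proves Theorems~\ref{thm:ident} and~\ref{thm:3}), so there is nothing to compare your argument against line by line; I am assessing it on its own merits. Your central reduction is the natural one and is surely what is intended: write the EVB shrinkage of Theorem~\ref{thm:1} as the exact minimizer of $\tfrac12\|\X-\S\|_F^2+P_{\blambda}(\S)$ with the induced weights $\blambda[r]=\D_\X[r,r]-\hat d_r$, use von Neumann's inequality to reduce to a separable problem on singular values, and check that monotonicity of the EVB shrinker keeps the unconstrained solution ordered; then observe that at an accumulation point each $\tilSk$ is the EVB threshold of its residual $\tilXX^{(k)}$, hence the global minimizer of its weighted subproblem with the other blocks held fixed. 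That part is sound, modulo two small points worth acknowledging: the EVB operator has a jump discontinuity at its threshold, so ``accumulation point implies fixed point of each block update'' needs a word of justification, and your formula gives $\blambda[r]=0$ for null singular values of $\tilXX^{(k)}$, conflicting with the paper's requirement that each $\blambda_k$ have non-zero entries (trivially repaired).

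The genuine gap is in your final paragraph. The induced weights are \emph{increasing} in $r$ on the retained components: $\D_\X[r,r]-\hat d_r$ behaves like $(M+N)\sigma^2/\D_\X[r,r]$ for large singular values and grows as $\D_\X[r,r]$ decreases toward the threshold, after which $\blambda[r]=\D_\X[r,r]$. A weighted sum of ordered singular values $\sum_r \blambda[r]\D_\S[r,r]$ is convex only when the weights are non-increasing, so your $P_{\blambda_k}$ is not convex, $\partial P_{\blambda_k}$ is not a convex subdifferential, and the passage from the shared-residual first-order certificate to ``no feasible perturbation with $\sum_k\Delta^{(k)}=0$ improves $\fpen$'' does not follow. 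A direct argument using only the subproblem optimality of each block gives, for a two-block swap $\Delta^{(k)}=-\Delta^{(k')}=\Delta$, only $\fpen(\{\hatSk+\Delta^{(k)}\}_{k=1}^K)-\fpen(\SolHat)\ge-\|\Delta\|_F^2$, which is not the claimed inequality. Note also that a single-coordinate move inside $\SolSpace$ is degenerate (the constraint pins the free block), so the meaningful content of ``coordinate-wise optimum of $\fpen$ over $\SolSpace$'' must be either the blockwise optimality of the penalized least-squares subproblems---which your second paragraph does establish and which is the defensible reading of the proposition---or a pairwise exchange optimality within $\SolSpace$, which your argument does not deliver without convexity. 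You should either prove the proposition under the former reading and drop the KKT paragraph, or supply a genuinely non-convex argument for the latter.
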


\section{Missing data imputation}

Now we consider the context in which the data are not fully observed and missing data must be imputed.  Let $\mathcal{M}$ index observations in the full dataset $\XX$ that are not observed: $\mathcal{M}=\{(m,n): \XX[m,n] \text{ is missing}\}$.  We develop an Expectation-Maximization (EM) approach for imputation that is analogous to previous approaches described for hard-thresholding \citep{kurucz2007methods}, soft-thresholding \citep{mazumder2010spectral}, and BIDIFAC+ \citep{lock2020bidimensional}.  The procedure for fixed estimates of the noise variance for each matrix, $\sigma_{ij}$, is described in Algorithm~\ref{alg2}.  
    \begin{algorithm}
  \caption{Estimation of EV-BIDIFAC model for missing data} \label{alg2}
  \begin{enumerate}
  \item Initialize $\tilSk=\mathbf{0}$ for $k=1,\hdots,K$
  \item Define $\tilXX$ as in \eqref{bimat}, with $\tilde{\X}_{i j}=\X_{ij}/\sigma_{ij}$ for $i=1,\hdots,I$ and $j=1,\hdots,J$.
  \item Set $\tilXX[m,n]=\tilde{\S}_{\bigcdot \bigcdot}[m,n]$ for $(m,n) \in \mathcal{M}$, where $\tilde{\S}_{\bigcdot \bigcdot} = \sum_{k=1}^K \tilSk$.  
  \item Update $\{\tilSk\}_{k=1}^K$ as in Step 4. of Algorithm~\ref{alg1}.
  \item Repeat Steps 3. and 4. until convergence of the $\{\tilSk\}_{k=1}^K$.
  \item Unscale the estimated structure: $\hat{\S}^{(k)}_{ij} = \sigma_{ij} \tilde{\S}^{(k)}_{ij}$ for $k=1,\hdots,K$, $j=1,\hdots,J$, $i=1,\hdots,I$. 
  \item Impute $\XX[m,n]= \hat{\S}_{\bigcdot \bigcdot}[m,n]$ for $(m,n) \in \mathcal{M}$.  
  \end{enumerate}	
  \end{algorithm} 

Note that Step 4.\ in Algorithm~\ref{alg2} is a partial maximization step,  as the free energy in~\eqref{freeenergy_bidi} is conditionally minimized for each module given the current imputed values, while Step 5.\ is an expectation step replacing the missing data with their expected value \eqref{bidi_est} under the current model fit. Further, the algorithm can be shown to minimize the free energy over the missing data $\{\XX[m,n] : (m,n) \in \mathcal{M}\}$; this result is given as Theorem~\ref{thm:3}, and is appealing because all unknowns (model hyperpameters, parameters and missing entries) are then estimated via optimizing the same unified free energy objective.  

 \begin{theorem}
 \label{thm:3}
Algorithm~\ref{alg2} iteratively minimizes the free energy \eqref{bidi_est} over $q(\cdot)$, $\bTau$, and $\{\XX[m,n] : (m,n) \in \mathcal{M}\}$.    
 \end{theorem}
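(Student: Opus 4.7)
The strategy is to split the free energy~\eqref{freeenergy_bidi} into the pieces that depend on the imputed missing entries and those that do not, then to identify Steps 3 and 4 of Algorithm~\ref{alg2} as block-coordinate minimizations of the resulting objective over $\{\XX[m,n]:(m,n)\in\mathcal{M}\}$ and over $(q,\bTau)$, respectively. Because each step is a legitimate block minimization of the same $F(q\mid\bSigma,\bTau)$ with the missing entries treated as free parameters, the iteration monotonically decreases a common free-energy objective, which is what the theorem claims.

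First I would isolate the contribution of the missing data. In the denominator of \eqref{freeenergy_bidi}, only $p(\XX\mid \U,\V,\bSigma)$ depends on entries of $\XX$; the priors $p(\U\mid\bTau_\U)$ and $p(\V\mid\bTau_\V)$ and the variational factors $q_{u,k},q_{v,k}$ do not. Under model \eqref{bayes_mod} the log-likelihood decomposes entrywise, so
\begin{align*}
-E_q\log p(\XX\mid\U,\V,\bSigma)
=\text{const}+\sum_{i,j}\sum_{(m,n)\in(i,j)}\frac{E_q\bigl[(\XX[m,n]-(\U\V^T)[m,n])^2\bigr]}{2\sigma_{ij}^2}.
\end{align*}
For each $(m,n)\in\mathcal{M}$ the inner expectation, viewed as a function of the free parameter $\XX[m,n]$, is a quadratic with unique minimizer $\XX[m,n]=E_q[(\U\V^T)[m,n]]=\hat{\S}_{\bigcdot\bigcdot}[m,n]$. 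After unscaling by $\sigma_{ij}$, this is exactly the assignment in Step 3 (equivalently, Step 3 together with Step 6 applied to the current iterate), so Step 3 minimizes $F(q\mid\bSigma,\bTau)$ over the missing entries with $(q,\bTau)$ held fixed.

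Next I would argue that Step 4 minimizes $F(q\mid\bSigma,\bTau)$ in the $(q,\bTau)$ block with the (now filled-in) data held fixed. Once all entries of $\tilXX$ are treated as observed, the prior independence across modules in \eqref{bayes_mod} and the mean-field factorization $q=\prod_k q_{u,k}q_{v,k}$ make the free energy separate into a quadratic coupling term and per-module entropy/prior terms. Fixing all modules other than $k$ and absorbing $\sum_{k'\ne k}\tilSk$ into the residual $\tilXX^{(k)}$ reduces the $k$-th subproblem to the single-matrix empirical variational Bayes problem of Section~\ref{mat.prelim} restricted to the submatrix indicated by $\R[\cdot,k],\C[\cdot,k]$. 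By Theorem~\ref{thm:1} the singular-value-thresholding update in Step 4(b) is the exact global minimizer of the free energy over $q_{u,k},q_{v,k}$ and the corresponding rows of $\bTau$, so one sweep through $k=1,\ldots,K$ is a block coordinate descent step on $(q,\bTau)$ that can only decrease $F$.

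Finally I would combine the two conclusions: Steps 3 and 4 each perform a block minimization of the same free-energy objective (with $\bSigma$ held fixed), so the sequence of iterates generates a monotone non-increasing sequence of free-energy values, bounded below, and hence iteratively minimizes $F$ over $q$, $\bTau$, and the missing entries jointly. The main obstacle I anticipate is the bookkeeping in step 2: verifying that after imputation the free energy decouples cleanly enough across modules that Theorem~\ref{thm:1} can be invoked per module, in particular checking that the $q$-expectation of the cross-product $(\U^{(k)}\V^{(k)T})[m,n]$ enters the likelihood only through its first two moments and that these are exactly what the closed-form EVB update uses. Aside from this, the remaining arguments are standard EM/variational bookkeeping.
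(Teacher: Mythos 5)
Your proposal is correct and follows essentially the same route as the paper: the paper likewise isolates the $E_q\log p(\XX\mid\U,\V,\bSigma)$ term, splits it over observed and missing entries, and observes that the resulting quadratic in each $\XX[m,n]$, $(m,n)\in\mathcal{M}$, is minimized at $E_q\,\mathbf{S}_{\bigcdot\bigcdot}[m,n]=\hat{\S}_{\bigcdot\bigcdot}[m,n]$, while the module updates are treated as partial minimizations of the same free energy. If anything, you spell out the block-coordinate M-step argument (reduction of each module update to the single-matrix EVB problem of Theorem~\ref{thm:1}) in more detail than the paper, which simply asserts that part as clear.
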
 
 
 A proof of Theorem~\ref{thm:3} is provided in Appendix~\ref{p2}. For the choice of $\bSigma$, we suggest two different approaches, depending on the type of missingness.  If entries are missing from an entire row or columns of a given data matrix $\X_{ij}$ (i.e., row- or column-wise missing), then $\sigma_{ij}$ can be estimated by the applying Theorem 2 to the complete submatrix $\{\XX[m,n] : (m,n) \notin \mathcal{M}\}$.  If the matrix is missing entries without entire rows or columns, then we update $\sigma_{ij}$ iteratively during Algorithm~\ref{alg2} as follows at the end of each cycle: 1. apply Theorem 2 to the matrix $\X_{ij}$ with currently imputed values to obtain $\sigma'_{ij}$, 2. update $\sigma_{ij}=\sigma'_{ij} \left(\frac{M_iN_j}{M_iN_j-|\mathcal{M}_{ij}|} \right)$ where $|\mathcal{M}_{ij}|$ is the number of missing entries in $\X_{ij}$.




\section{Simulations}

Here we present a collection of simulations to illustrate and compare the proposed approach with respect to various criteria.  We explore several conditions covering different signal-to-noise ratios, missing data scenarios, and linked matrix patterns with shared and specific structure.  Further simulation results (e.g., exploring different matrix dimensions) are available at\\
\url{https://ericfrazerlock.com/ev-bidifac-sims.html}.     

\subsection{Single matrix}
\label{single_sim}

We first describe a simple simulation on a single matrix to illustrate the empirical variational Bayes approach to matrix factorization as a flexible compromise between a hard-thresholding and a conservative soft-thresholding approach.  We consider a fully observed matrix $\X: 1000 \times 100$, with $\X= c \U \V^T + \E$ where the entries of $\E$, $\U: M \times 10$ and $\V: N \times 10$ are each independent standard normal, and $c$ is a manipulated value controlling the signal-to-noise ratio.  We consider $10$ values of $c$ distributed on a log-uniform scale between $c=0.05$ and $c=1$, and generate $10$ replications of the simulation for each value of $c$. For each generated dataset, we apply the following approaches:
\begin{itemize}
\item OPT: The oracle operator on the singular values when the structure $\S=c \U \V^T$ is known: $\hat{\S}_{\text{OPT}}=\U_X \hat{\D} \V_X^T$ where $\hat{\D}$ is diagonal with entries that solve the OLS problem to minimize $||\S-\U_X \hat{\D} \V_X^T||_F^2$. 
\item EVB: the empirical variational Bayes approach applying Theorems~\ref{thm:1} and \ref{thm:2}.
\item HT: The hard-thresholding approach described in Proposition~\ref{prop:1}, using the true rank $R=10$.
\item NN: The nuclear norm penalized soft-thresholding approach described in Proposition~\ref{prop:2}, with $\lambda=\sqrt{M}+\sqrt{N}$ (this correspond to a threshold motivated by Proposition~\ref{prop:4} with true noise variance $\sigma=1$).      
\end{itemize}
For each setting and for each method we compute the relative squared error (RSE) and oracle-normalized squared error (ONSE)
 \begin{align}\text{RSE}=\frac{||\S-\hat{\S}||_F^2}{||\S||_F^2} \text{ and } \text{ONSE}=\frac{||\S-\hat{\S}||_F^2}{||\S-\hat{\S}_{\text{OPT}}||_F^2}. \label{RSE_ONSE} \end{align}
Figure~\ref{fig1} shows the mean values for both RSE and ONSE under the different methods as a function of the signal-to-noise $c$. The hard-thresholding approach tends to overfit the signal, which can lead to particularly poor recovery when the signal is smaller relative to the noise. In contrast, the nuclear norm penalty is appropriately conservative in that it never overfits and performs better for lower signal; however, it tends to over-shrink for higher signal and can have several fold higher error than hard-thresholding.  The EVB approach is a flexible compromise between the two methods, performing comparably or better than both methods for all scenarios and substantially outperforming them for moderate levels of signal.  Moreover, performance of the EVB method is impressively close to the optimal in most scenarios.      

\begin{figure}[!h]
\caption{Error in estimating the underlying low-rank signal for a single matrix under different methods, and under different signal-to-noise (s2n) ratios.  The left-panel gives relative squared error, and the right-panel gives oracle normalized standard error. All axes are on a log-scale.}
\label{fig1}
\includegraphics[width=\textwidth]{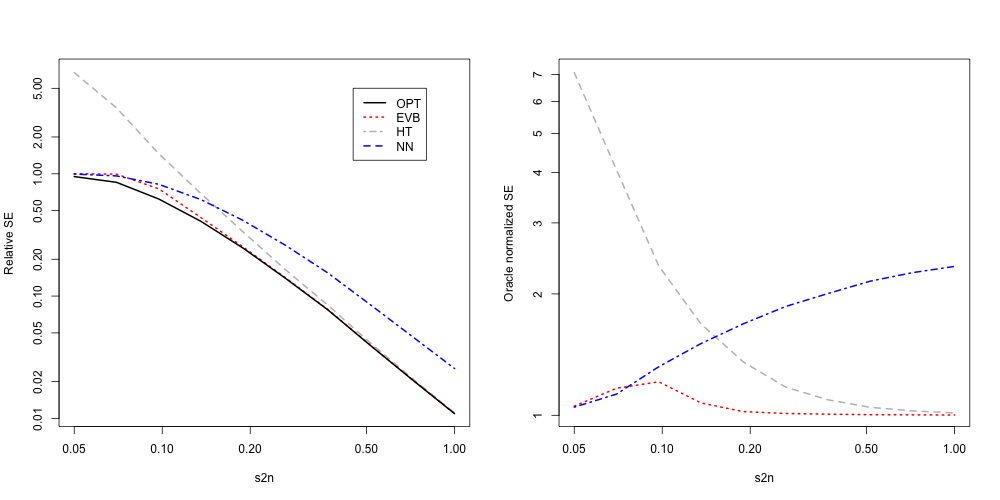}	
\end{figure}

We consider another scenario with low-rank structure in which the underlying components have heterogeneous levels of signal.  First, we simulate $\S'=\U' \V'^T$ where the entries of  $\U: M \times 10$ and $\V: 10 \times N$ are independent standard normal. Then, we generate the signal as \begin{align} \S= \U_{S'} \D_S \V_{S'}^T \label{het.sig} \end{align} where $\D_S$ is diagonal with entries sampled randomly from a log-uniform distribution between $0.05 \sqrt{MN}$ and $\sqrt{MN}$, and $\X=\S+\E$ with entries of $\E$ independent standard normal.  Thus, some singular values of the true low-rank structure $\S$ may correspond to high signal, others may be of moderate signal, and others may be of weak or undetectable signal.  We generate $100$ replications in this case, and for each we apply the EVB, HT, and NN methods described above, as well as the following additional approaches:
\begin{itemize}
  \item HT-OPT: the oracle hard-thresholding approach, in which the rank is selected that results in minimal squared error between the estimated signal and the true signal.  
  \item NN-OPT: the oracle soft-thresholding approach, in which the nuclear norm penalty is selected that results in minimal squared error between the estimated signal and the true signal. 
\item RMT: the low-rank reconstruction approach proposed in \citet{shabalin2013reconstruction}, based on minimizing squared error loss using asymptotic random matrix theory, with the true error variance $\sigma^2=1$. 
\end{itemize}            
Boxplots showing distribution of RSE and ONSE for recovering the underlying structure across replications are shown in Figure~\ref{fig2}.  The EVB and RMT methods have performance close to the oracle, and vastly outperform hard-thresholding and nuclear norm penalized approaches.  The relatively poor performance of HT-OPT suggests that singular values benefit from some shrinkage, while the poor performance of NN-OPT indicates that the level of shrinkage should not be uniform across singular values.  
 \begin{figure}[!h]
\caption{Error in estimating underlying low-rank structure in which the rank-1 components have heterogenous signal sizes.  The left-panel gives relative squared error (RSE), and the right-panel gives oracle normalized standard error (ONSE). All axes are on a log-scale.}
\label{fig2}
\includegraphics[width=\textwidth]{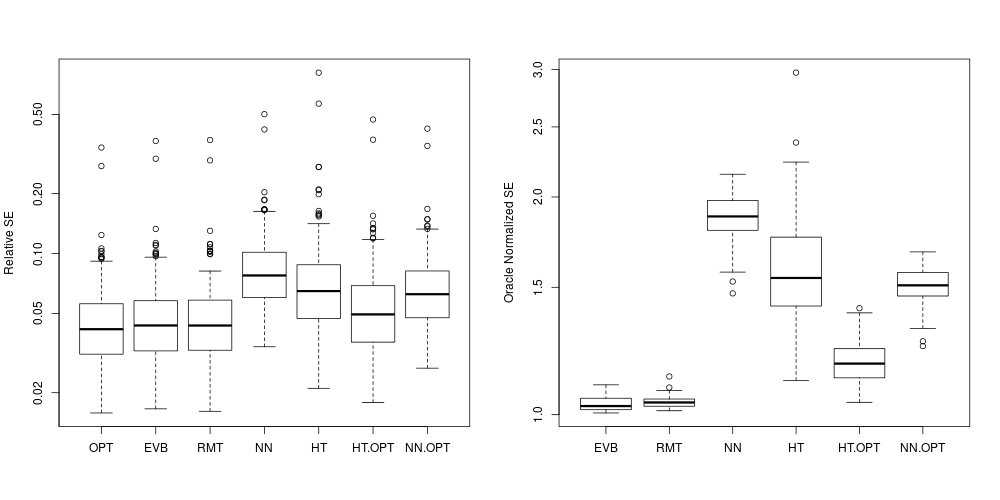}	
\end{figure}

 \begin{figure}[p]
\caption{Missing data imputation accuracy for different levels of missingness. The left column gives $\text{RSE}_{\text{miss}}$ and the right gives $\text{ONSE}_\text{miss}$. }
\label{fig3}
\includegraphics[width=\textwidth]{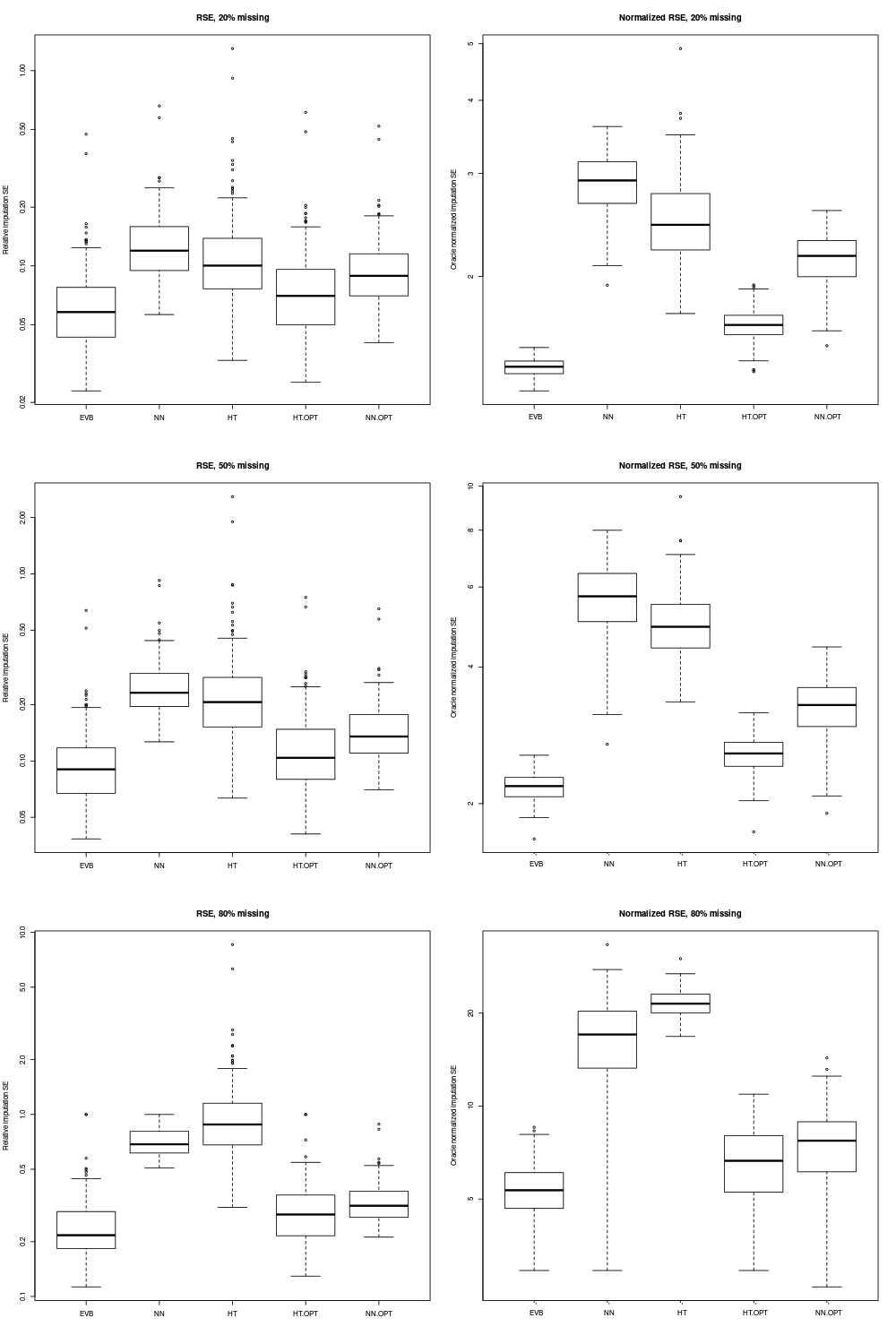}	
\end{figure}

We also compare missing data imputation performance of the EVB approach using Algorithm~\ref{alg2}.  For each of the $100$ simulated datasets generated, we consider scenarios in which $20\%$, $50\%$, or $80\%$ of the entries are randomly held out as missing and imputed.  In addition to EVB imputation, we apply similar EM-imputation methods for the soft-thresholding (corresponding to {\it softImpute} of \citet{mazumder2010spectral}) and hard thresholding \citep{kurucz2007methods} approaches. The RMT approach is not considered here, as it does not have an associated missing data imputation procedure.  For the NN-OPT and HT-OPT approaches, we select the value of $\lambda$ or $R$, respectively, that minimizes squared imputation error for the signal of the missing data 
\begin{align}\text{RSE}_{\text{miss}}=\frac{\sum_{(m,n) \in \mathcal{M}} (\S[m,n]-\hat{\S}[m,n])^2}{\sum_{(m,n) \in \mathcal{M}} \S[m,n]^2}. \label{rsemiss}\end{align}
For oracle normalized imputation error $\text{ONSE}_\text{miss}$, we divide $\text{RSE}_{\text{miss}}$ by the RSE for the OPT method in \eqref{RSE_ONSE}.  The results are shown in Figure~\ref{fig3}, and demonstrate that the EVB imputation approach is stable for different levels of missingness and substantially outperforms the other approaches.

\subsection{Two linked matrices}

Here we generate two matrices, $\X_1: 500 \times 100$ and $\X_2: 500 \times 100$, that are linked by columns: $\X_{\bigcdot}=[\X_1;\X_2]: 1000 \times 100$.  We consider this context primarily to allow comparison of the EV-BIDIFAC approach with several other methods that identify shared and unshared low-rank structure in this setting. 

First, we perform an illustrative simulation akin to that in Figure~\ref{fig1} for a single matrix. We simulate shared structure $\S_0=c \U_0 \V_0^T$ and individual structures  $\S_1=c \U_1 \V_1^T$ $\S_2=c \U_2 \V_2^T$, where $\U_0: 1000 \times 5$, $\V_0: 100 \times 5$, $\U_1: 500 \times 5$, $\U_2: 500 \times 5$, $\V_1: 100 \times 5$, and $\V_2: 100 \times 5$ each have independent entries from a standard normal distribution. The observed data are then generated as $\X_{\bigcdot}=\S_0+[\S_1;\S_2]+\E$ where entries of $\E$ are independent standard normal. Thus $c$ again controls the signal to noise ratio and we consider $10$ values of $c$ distributed on a log-uniform scale between $c=0.05$ and $c=1$, and generate $10$ replications  for each value of $c$.  Here we consider estimates for the following methods:
\begin{itemize}
\item EVB: the EV-BIDIFAC approach described in Algorithm~\ref{alg2}
\item JIVE: the JIVE method implemented in \citet{oconnell2016} for true shared and unshared ranks $R=5, R_1=5, R_2=5$.
\item UNIFAC: the BIDIFAC method \citep{park2020integrative} for vertical integration using the true noise variance $\sigma^2=1$.	
\end{itemize}
Here JIVE is analogous to HT and UNIFAC to NN for a single matrix, as they involve iterative applications HT or NN, respectively, to the shared and unshared components.  For each method we compute RSE for the overall structure $\hat{\S}=\hat{\S}_0+[\hat{\S}_1;\hat{\S}_2]$ as in \eqref{RSE_ONSE}, and also compute the relative decomposition squared error (RDSE) as 
\[\text{RDSE}=\frac{||\S_0-\hat{\S}_0||_F^2+||\S_1-\hat{\S}_1||_F^2+||\S_2-\hat{\S}_2||_F^2}{||\S_0||_F^2+||\S_1||_F^2+||\S_2||_F^2}.\]
The results are shown in Figure~\ref{fig4}. The results for overall RSE are similar to that for a single matrix, with UNIFAC appropriately conservative for lower signal but substantially underperforming JIVE for higher signal due to over-shrinkage.  However, per the RDSE results, UNIFAC can still identify the shared and unshared components more accurately with higher signal -- this phenomenon was noted previously \citep{lock2020bidimensional} and is in part due to orthogonality restrictions imposed by JIVE and related methods that are required for uniqueness of the decomposition without shrinkage.  The EVB approach again acts as a flexible compromise between the two methods, and substantially outperforms both for many s2n scenarios, particularly with respect to RDSE.  

 \begin{figure}[!h]
\caption{Error in estimating the underlying low-rank signal for two linked matrices under different signal-to-noise (s2n) ratios.  The left-panel gives the RSE, and the right-panel gives RDSE. All axes are on a log-scale.}
\label{fig4}
\includegraphics[width=\textwidth]{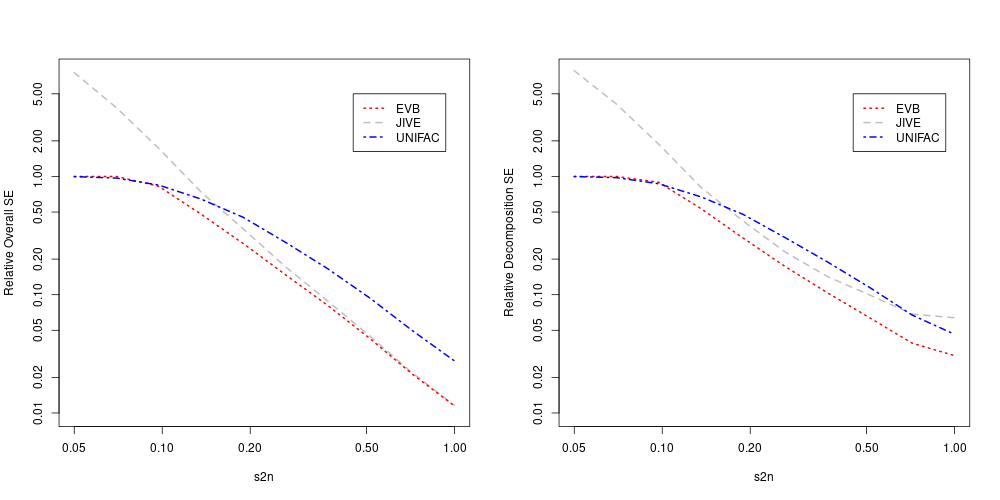}	
\end{figure}

Now, we compare RSE and RDSE under a scenario with heterogenous signal levels for the factors in each of the shared and unshared structures.  The rank-5 signal matrices $\S_0, \S_1,$ and $\S_2$ are each generated as in \eqref{het.sig} with singular values again generated uniformly from a log-uniform distribution between $0.05\sqrt{MN}*0.05$ and $\sqrt{MN}$ for shared structure and between $0.05\sqrt{M_i N}$ and $\sqrt{M_i N}$ for specific structure, and $100$ replications are generated for this scenario.  In addition to the EVB, JIVE, UNIFAC methods we also consider the following approaches:
\begin{itemize}
\item SLIDE: the structural learning and integrative decomposition (SLIDE) method \citep{gaynanova2019structural}, with the true number of shared and unshared components (i.e., the ranks) specified.  
\item HNN.OPT: the hierarchical nuclear norm (HNN) approach \citep{yi2023hierarchical}, with tuning parameters selected on a grid to minimize actual RSE for the true signal.   	
\end{itemize}
 Figure~\ref{fig5} gives distribution of RSE and RDSE over the replications for the different methods. The EVB approach universally outperforms other methods in terms of both RSE and RDSE, and the benefit is large (by a factor of 2 or more) for both criteria in several cases.  While the HNN method is motivated by a model with shared and unshared low-rank structure, it does not provide an explicit decomposition of the structure and thus it is not shown for RDSE.    

 \begin{figure}[!h]
\caption{RSE (left) and RDSE (right) for low-rank structure with heterogenous signal levels for two linked matrices.}
\label{fig5}
\includegraphics[width=\textwidth]{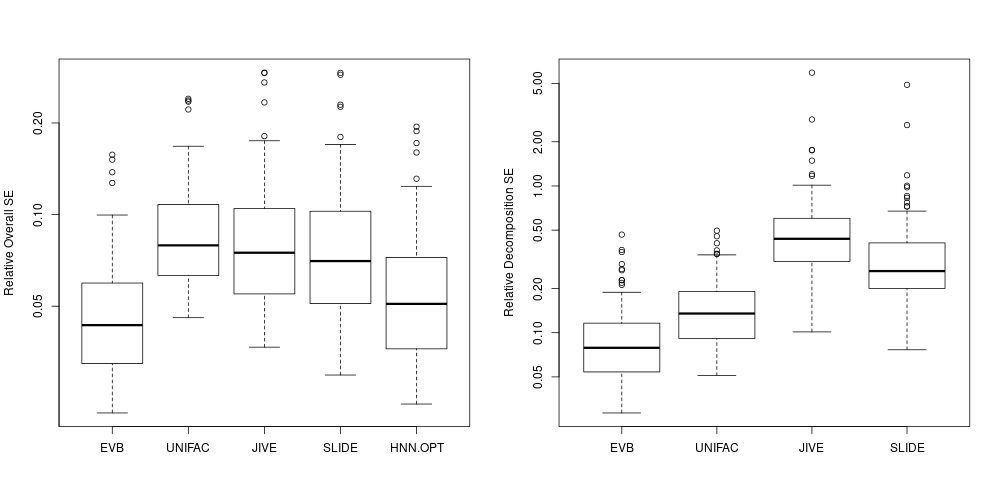}	
\end{figure}

\subsection{Bidimensionally linked matrices}
\label{bidim.sim}

Now, we consider the full bidimensionally linked framework in \eqref{bimat}, with $I=J=2$, $N_1=N_2=50$ and $D_1=D_2=500$.  Data were generated as 
\[\XX=\sum_{k=1}^K \Sk + \EE,\]
where the entries of $\EE$ are independent standard normal.  There are $K=9$ possible distinct modules of low-rank structure in this setting: globally shared, shared across rows for either of the of the $J=2$ column sets, shared across columns for either of the $I=2$ row sets, or structure specific to any of the $4$ matrices $\X_{ij}$.  We generate $100$ datasets, where for each replication $5$ of the $9$ possible modules are randomly selected to have non-zero low-rank structure and the others are set to zero, $\Sk=0$.  All non-zero modules are generated to have rank $2$ structure on the given submatrix as in \eqref{het.sig}.  For each simulated dataset, we apply the following methods:
\begin{itemize}
\item EB-BIDI: the approach in Algorithm~\ref{alg1}, with $\R$ and $\C$ enumerating all $K=9$ possible modules.
\item BIDIFAC: the BIDIFAC algorithm described in~\citep{lock2020bidimensional} with $\R$ and $\C$ enumerating all possible modules and using the true noise variance $\sigma^2=1$.
\item EB-SEP: the EVB approach applied to each matrix $\X_{ij}$ separately 
\item EB-JOINT: the EVB approach applied to the overall matrix $\XX$.	
\end{itemize}
 For each method, we compute the overall RSE for $\S_{\bigcdot \bigcdot}=\sum_{k=1}^K \Sk$ as in \eqref{RSE_ONSE}, and for EB-BIDI and BIDIFAC we compute the RDSE as 
 \[\text{RDSE}=\frac{\sum_{k=1}^9 ||\Sk-\hatSk||_F^2}{\sum_{k=1}^9 ||\Sk||_F^2}.\]
 The results are summarized in Figure~\ref{fig6}.  Overall RSE is substantially better for EB-BIDI vs. other methods; in comparison to EB-SEP and EB-JOINT, this illustrates the advantages of a parsimonious decomposition of the low-rank signal into the different shared and unshared components with respect recovering the overall signals.  Moreover, EB-BIDI has substantially better RDSE compared to BIDIFAC, demonstrating the advantages of the empirical variational Bayes objective with respect to accurately decomposing the structure into its different components.  Moreover, EB-BIDI correctly estimated modules with no true structure ($\Sk=\0$) as $\hatSk=\0$ for all of the $400$ total instances ($100$ total datasets with $4$ possible modules set to $\0$ in each case); on the other hand, it estimated $\hatSk=\0$ in only $19/500=3.8\%$ of instances when $\Sk \neq 0$, which occured when $\Sk$ was difficult to detect due to a small signal size.  This demonstrates how the approach can correctly recover the underlying rank-sparsity structure, only estimating non-zero structure for a module if it truly has distinct low-rank structure on its given rows and columns.               

 \begin{figure}[!h]
\caption{RSE (left) and RDSE (right) for the scenario with $2 \times 2$ bidimensionally linked matrices.}
\label{fig6}
\includegraphics[width=\textwidth]{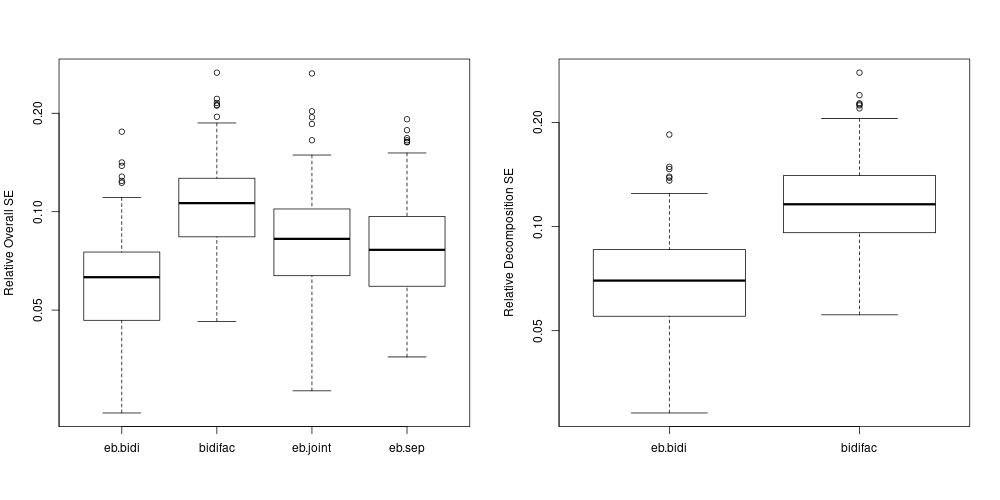}	
\end{figure}

We extend this simulation design to consider missing data imputation accuracy for the generated datasets. We consider two different missingness scenarios: an {\it entry-wise missing} scenario in which 20\% of the entries are randomly set to missing in each matrix, and a {\it blockwise missing} scenario in which 10\% of the columns and 10\% of the rows are randomly selected to have all of their entries missing in each matrix.  In addition to imputation using the previous methods, we also consider the NN.OPT and HT.OPT methods as described in Section~\ref{single_sim}. For entrywise imputation the $\text{RSE}_{\text{miss}}$ is computed as in \eqref{rsemiss}. For blockwise imputation, the true underlying structure is taken to be the sum of column-shared structure for missing rows and the sum of row-shared structure for missing columns, as this is the only estimable signal when an entire row or column is missing.  That is, letting $\S_{\bigcdot \bigcdot}^C=\sum_{k:\C[\cdot,k]=\mathbf{1}} \Sk$ and $\S_{\bigcdot \bigcdot}^R=\sum_{k:\R[\cdot,k]=\mathbf{1}} \Sk$, 
\begin{align}\text{RSE}_{\text{miss}}=\frac{\sum_{(m,n) \in \mathcal{M}^R} (\S^C_{\bigcdot \bigcdot}[m,n]-\hat{\S}_{\bigcdot \bigcdot}[m,n])^2+\sum_{(m,n) \in \mathcal{M}^C} (\S^R_{\bigcdot \bigcdot}[m,n]-\hat{\S}_{\bigcdot \bigcdot}[m,n])^2}{\sum_{(m,n) \in \mathcal{M}^R} \S^C_{\bigcdot \bigcdot}[m,n]^2+\sum_{(m,n) \in \mathcal{M}^C} \S^R_{\bigcdot \bigcdot}[m,n]^2}, \label{rsemiss2}\end{align}
where $\mathcal{M}^R$ indexes row-missing entries and $\mathcal{M}^C$ indexes column missing entries. 
The results are shown in Figure~\ref{fig7}.  The EB-BIDI method substantially and consistently outperforms the other approaches with respect to missing data imputation, and the advantages are particularly large for blockwise missing imputation.  Note that the EB-SEP approach does not estimate any structure for blockwise missing data, as it is not estimable when considering each matrix separately.     
 \begin{figure}[!h]
\caption{RSE for entrywise missing data imputation (left) and blockwise missing data imputation (right), for the scenario with $2 \times 2$ bidimensionally linked matrices.}
\label{fig7}
\includegraphics[width=\textwidth]{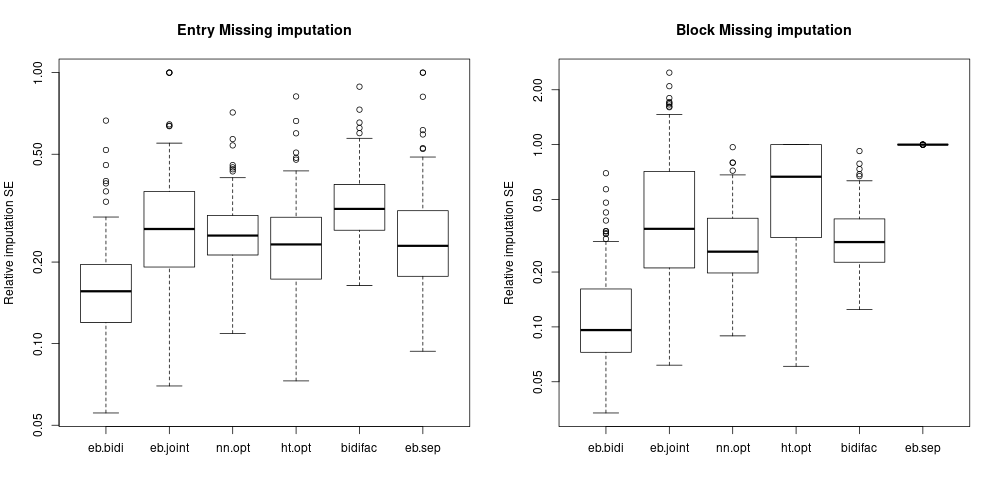}	
\end{figure}

\section{Application to BRCA Data}
\label{app}

Here we describe an application to gene expression (mRNA) and microRNA (miRNA) data for breast cancer (BRCA) from the Cancer Genome Atlas  (TCGA) \citep{cancer2012comprehensive}. The data were processed as described in \citet{park2020integrative}, resulting in $D_1=500$ mRNAs and $D_2=500$ miRNAs for $N_1=660$ cancer tumor samples and $N_2=86$ non-cancerous breast tissue samples from unrelated individuals.  Thus, the data can be represented as bidimensionally linked matrices as in \eqref{bimat}, where $\X_{11}$ is mRNA data for cancer samples, $\X_{12}$ is mRNA data for normal samples, $\X_{21}$ is miRNA data for cancer samples and $\X_{22}$ is miRNA data for normal samples.  The decomposition of data into different modules of shared structured variation is very well-motivated in this context, as low-rank structure on any of the $K=9$ possible modules is plausible and interpretable.  For example, it is reasonable to expect shared structure across mRNA and miRNAs, as miRNAs can regulate mRNA. It is also reasonable to expect some shared structure across tumor and normal samples, as breast tumors arise from normal tissue and so they likely share some patterns of molecular variation.  However, it is also reasonable to expect specific structure in each case, e.g., patterns of molecular variability that are `cancer-specific' (i.e., not present in normal tissue) are of particular interest.  The BIDIFAC approach decomposes such shared and unique structure across both row sets (mRNA/miRNA) and column sets (tumor/normal), while other methods do not. Moreover, as demonstrated in our simulations, the empirical variational (EV) Bayes approach captures underlying low-rank structure more efficiently and accurately than other low-rank approximation methods. Thus, we expect our EV-BIDIFAC approach to decompose the underlying shared and unshared signals in an informative way, and accurately recover the underlying signal in the full dataset $\X_{\cdot \cdot}$ better than alternative methods.

\begin{figure}[!h]
\caption{Heatmaps of the BRCA data (left) and the full low-rank structure estimated by EV-BIDIFAC. Higher values are colored red, lower values are colored blue, and missing columns are colored black.}
\label{fig.heat}
\includegraphics[width=\textwidth]{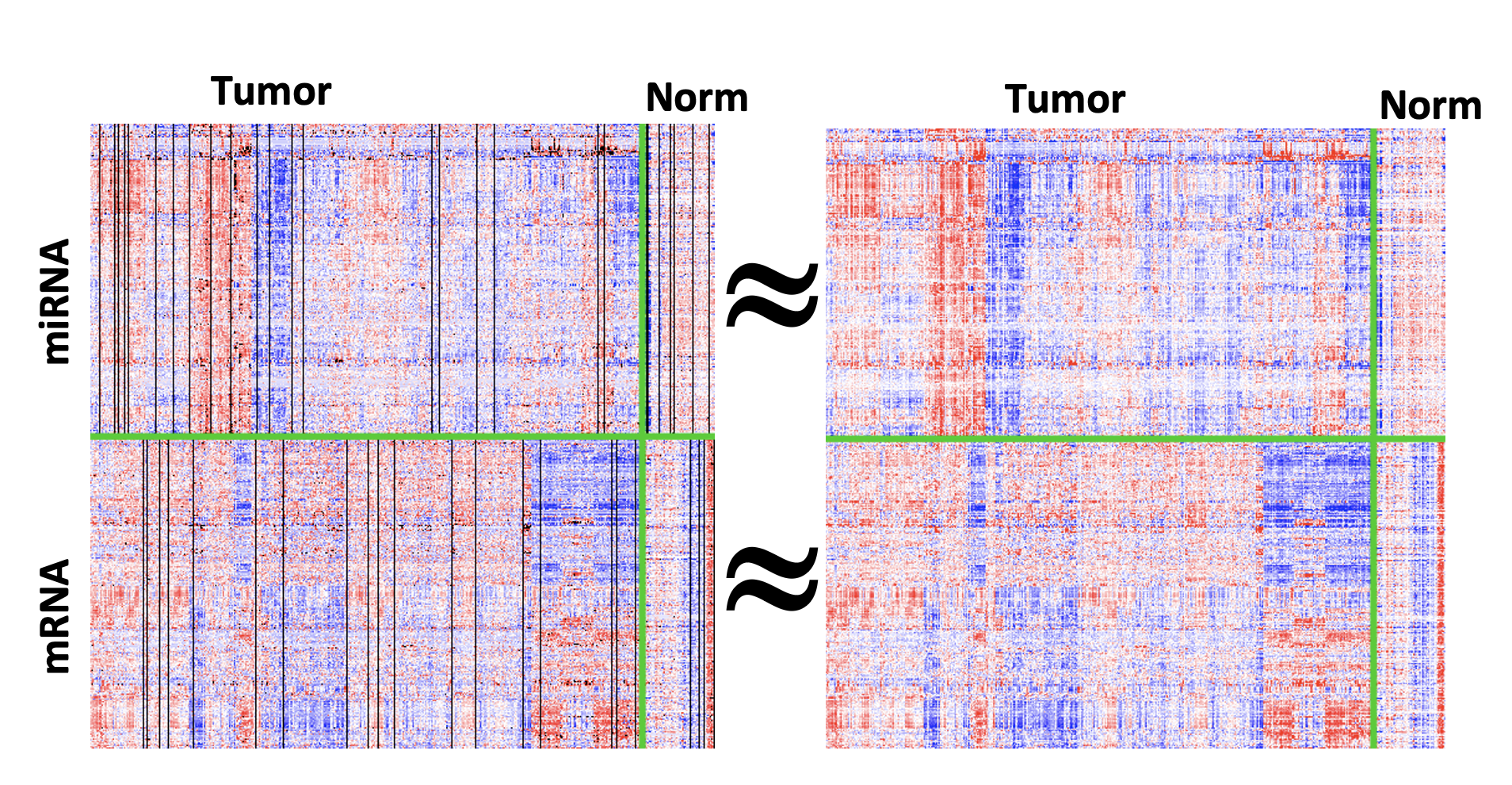}	
\end{figure}

We applied EV-BIDIFAC wherein $\R$ and $\C$ enumerate all $K=9$ possible modules, and significant distinct low-rank structure was identified for each module ($\hatSk \neq \0$ for all $k$).  Figure~\ref{fig.heat} shows heatmaps of the original data (including missing data for miRNA and mRNA) and the resulting overall fit $\hat{\S}_{\bigcdot \bigcdot}=\sum_{k=1}^K \hatSk$.  While all estimated structures are low-rank, it captures the original data well and imputation of the missing columns appears reasonable. Considering the relative amount of variation present in the different modules reveals the extent of shared variation across mRNA and miRNA, and across tumor and normal cells.  Interestingly, only 1.47\% of the low-rank signal is in the module shared globally across all submatrices, while 59.8\% of the signal is shared across mRNA and miRNA but specific to tumors. This suggests that the substantial coordinated activity between mRNA and miRNA in breast tumors is cancer-specific and not present in the normal cells they derive from.   

We further compare imputation accuracy for data under entry-wise, column-wise or row-wise missingness under a cross-validation scheme.  The accuracy of entry-wise imputation is of interest because it is a good proxy for the accuracy of the the overall fit of the estimated structure.  Column-wise imputation accuracy is of interest for two reasons: (1) the actual data have column-wise missingness, as not all samples have both mRNA and miRNA data in the full TCGA cohort, and (2) it provides a robust measure of the level of association between the two data sources, addressing how well can we predict gene expression with miRNA data, and vice versa.  Row-wise imputation accuracy is also of interest, as it provides a measure of similarity in molecular structure between cancer and normal samples (e.g., how well can we predict levels of a missing gene in cancer samples, given its associations in normal samples).

We create 20 folds in the data, where for each fold 5\% of the columns, 5\% of the rows, and 5\% of the remaining entries in each matrix are randomly withheld as missing. For each fold, the missing data are imputed using the EV-BIDIFAC, BIDIFAC, EB-SEP or EB-JOINT approaches described in Section~\ref{bidim.sim}. We further consider nuclear norm penalized imputation (NN) and hard-thresholding (HT) using {\tt softImpute} \citep{softImpute}, jointly for the full matrix (NN-JOINT and HT-JOINT) or separately for each submatrix (NN-SEP and HT-SEP).  In each case the nuclear norm penalty is selected via random matrix theory, and the rank for hard-thresholding is determined via cross-validation by optimizing MSE for an additional 5\% of randomly withheld entries.  As an alternative to low-rank methods we consider k-nearest neighbor (KNN) imputation using the package {\tt impute} \citep{KNN} on the full matrix. Lastly, we consider a structured matrix completion approach \citep{cai2016structured} designed to impute a contiguous missing submatrix under a low-rank approximation using the {\tt StructureMC} package \citep{StructureMC}. As StructureMC imputes only one submatrix, it does not apply to randomly missing entries, and we hold out one set of rows or columns at a time to impute from the full matrix, with all other entries observed.    In each case, the relative squared error for the true observed data is computed and averaged across the four datasets yielding 
\[\text{MRSE}_{\text{miss}}=\frac{1}{4} \sum_{i=1}^2 \sum_{j=1}^2 \frac{\sum_{(m,n) \in \mathcal{M}_{ij}} (\X_{ij}[m,n]-\hat{\S}_{ij}[m,n])^2}{\sum_{(m,n) \in \mathcal{M}_{ij}} \X_{ij}[m,n]^2}.\]
The $\text{MRSE}_{\text{miss}}$ under different imputation scenarios are summarized in Table~\ref{tab1}.  The EV-BIDIFAC approach performs relatively well across all scenarios.  For entry-wise missing data EV-BIDIFAC performs much better than BIDIFAC (likely due to over-shrinkage of the signal for BIDIFAC) and similarly to EB-SEP, indicating that structure can be well-estimated by considering each dataset separately; however, separate estimation does not allow for row- or column imputation and does not estimate the shared variation that is of interest.  The $\text{MRSE}_{\text{miss}}$ for row-missing data is relatively modest ($0.90$) compared to entry-wise missing ($0.38$), indicating that there is little shared information between tumor and normal cells.  The joint approach EB-JOINT on $\XX$ performs relatively poorly across all scenarios as it tends to overfit shared variation when there  substantial specific variation across platforms (mRNA and miRNA) and cohorts (tumor and normal); this illustrates the benefits of a comprehensive decomposition of shared and specific variation across the constituent matrices.      
  \begin{table}[!h]
\caption{Imputation accuracy ($\text{MRSE}_{\text{miss}}$) for BRCA and normal tissue mRNA and miRNA data. The lowest values in each column are shown in \textbf{bold} font; if multiple values are
bolded, then their means were not significantly different at a 0.05 level using a paired t-test over the 20 folds}.  
\label{tab1}
\centering 
\begin{tabular}{c rrrr} 
\hline\hline 
Method Name& Entry-missing & Column-missing & Row-missing & Overall \\ 
\hline 
EV-BIDIFAC & \textbf{0.389} & \textbf{0.756} & \textbf{0.903} & \textbf{0.682}\\ 
BIDIFAC & 0.526 & 0.870 & \textbf{0.909} & 0.768\\
 EB-SEP & \textbf{0.381} & 1.00 & 1.00 & 0.79\\
 EB-JOINT &0.510 & 1.01 & 1.67 & 1.06 \\ 
 NN-SEP & 0.536 & 1.00 & 1.00 & 0.845 \\
NN-JOINT & 0.614 & 0.891 &  \textbf{0.881} & {0.796}  \\
 HT-SEP & 0.459 & 1.00 & 1.03 & 0.831 \\
  HT-JOINT & 0.520 & 4.50 & 13.1 & 6.05 \\
 KNN & 0.646 & 1.26 & 1.01 & 0.974 \\
 StructureMC & - & 0.977 & 1.01 & - \\
\hline 
\end{tabular}
\end{table}

\section{Conclusion and discussion}
\label{discuss}

The proposed EV-BIDIFAC approach can be considered as a flexible compromise between hard- and soft- thresholding techniques for the linked matrix decomposition problem, and has impressive performance with respect to 1.\ estimating underlying low-rank structure, 2.\ decomposing underlying structure into its shared, partially-shared and specific components, and 3.\ missing data imputation under a variety of scenarios.  An appealing aspect of the method is that it derives from a unified and intuitive model with a corresponding objective function that does not require any pre-hoc choices or parameters to tune.  In simulation comparisons the proposed approach tended to outperform other methods even when their parameters are set to the true values (e.g., true rank(s) or true error variance) or ideal values (e.g., to minimize error with knowledge of the true structure).

While we have focused on obtaining a point estimate for underlying structure that minimizes free energy, this could also be used to empirically estimate hyperparameters and initialize a sampling algorithm for the associated fully Bayesian model.  This would capture uncertainty in the underlying factorization, which can be used for multiple imputation of missing values or to propagate uncertainty if the lower-dimensional components are used in subsequent statistical modeling.  A related Bayesian model is described in \citet{samorodnitsky2022bayesian}, which also illustrates the importance of uncertainty propagation. 

Often, data take the form of higher-order tensors instead of matrices, and low-rank tensor factorizations are useful to uncover their underlying structure \citep{kolda2009tensor}. The models considered here may be extended to the tensor setting, e.g., by allowing  factors for each dimension of a tensor to have a distribution analogous to that for $\U$ and $\V$ in the matrix setting. However, the algorithms are not straightforward to extend, as there is likely not a closed-form solution for a tensor akin to that in Theorem~\ref{thm:1} for a matrix.  Extending this approach to the setting of a single tensor or linked tensors is an interesting    direction for future work.

\section*{Supplementary materials} 

Reproducible R scripts for all results presented in this manuscript are available in a supplementary zipped folder.  Software to perform the EV-BIDIFAC method is available at \url{https://github.com/lockEF/bidifac}. Additional simulation study results are available at \url{https://ericfrazerlock.com/ev-bidifac-sims.html}.    

\section*{Acknowledgement}   

This work was supported by the NIH National Institute of General Medical Sciences (NIGMS) grant R01-GM130622. 

\begin{appendices}

\section{Proofs}

\subsection{Proof of Theorem~\ref{thm:ident}}
\label{p1}

The proof for Theorem~\ref{thm:ident} follows a similar structure to that for Theorem 1 in \citet{lock2020bidimensional}. We first establish four lemmas, Lemmas~\ref{lem1}, \ref{lem2}, \ref{lem3}, and \ref{lem4}, which are useful to prove the main result.
 
\begin{lemma} \label{lem1} Take two decompositions  $\SolHat \in \SolSpace$ and $\SolTil \in \SolSpace$, and assume that both minimize the structured penalty for given $\{\blambda\}_{k=1}^K$: 
\[\fpen (\SolHat)=\fpen \left(\SolTil\right)=\underset{\SolSpace}{\min} \; \fpen  (\Sol).\]
Then, for any $\alpha \in [0,1]$,
\begin{align*}
||\alpha \hatSk+(1-\alpha)\tilSk||_* = \alpha ||\hatSk||_*+(1-\alpha) ||\tilSk||_*  
\end{align*}
for $k=1,\hdots,K$.
\end{lemma}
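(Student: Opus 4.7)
My plan is a convexity argument that exploits the block-separable form of $\fpen$. First I would verify that $\SolSpace$ is convex: the defining identity $\sum_k \Sk = \hat{\X}_{\bigcdot\bigcdot}$ is a linear constraint, and the zero-block conditions $\S^{(k)}_{ij} = \0$ (whenever $\R[i,k]=0$ or $\C[j,k]=0$) are preserved under convex combinations. I would also verify that each $P_{\blambda_k}$ is convex by rewriting it through Abel summation as
\[
P_{\blambda_k}(\S) = \sum_{r=1}^{\min(M,N)}\bigl(\blambda_k[r] - \blambda_k[r+1]\bigr)\, K_r(\S),
\]
where $K_r$ denotes the Ky Fan $r$-norm and the weights are extended by $\blambda_k[\min(M,N)+1]=0$. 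Under the natural reading that each $\blambda_k$ has strictly positive and non-increasing entries, the coefficients are non-negative and the $K_r$'s are norms, so $P_{\blambda_k}$, and therefore $\fpen = \sum_k P_{\blambda_k}$, is convex.

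The main step is then a standard Jensen equality argument. Since $\SolHat$ and $\SolTil$ both minimize the convex function $\fpen$ over the convex set $\SolSpace$, any convex combination $\alpha\SolHat + (1-\alpha)\SolTil$ also lies in $\SolSpace$ and attains the same minimum value. Convexity yields
\[
\fpen\bigl(\alpha\SolHat + (1-\alpha)\SolTil\bigr) \leq \alpha\fpen(\SolHat) + (1-\alpha)\fpen(\SolTil),
\]
while the minimum property forces equality. Because $\fpen$ decomposes as a sum of the individually convex $P_{\blambda_k}$'s, each of which already satisfies the corresponding termwise Jensen inequality, equality at the sum level forces equality at each summand:
\[
P_{\blambda_k}\bigl(\alpha\hatSk + (1-\alpha)\tilSk\bigr) = \alpha P_{\blambda_k}(\hatSk) + (1-\alpha) P_{\blambda_k}(\tilSk), \quad k=1,\ldots,K.
\]

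The main obstacle I anticipate is translating this equality in the weighted spectral sum $P_{\blambda_k}$ into the equality in the ordinary nuclear norm $||\cdot||_*$ as stated in the conclusion. For this I would again invoke the Abel decomposition together with the subadditivity $K_r(\alpha A + (1-\alpha)B) \leq \alpha K_r(A) + (1-\alpha)K_r(B)$ of each Ky Fan norm. Since $P_{\blambda_k}$ is a non-negative combination of the $K_r$'s and equality holds in the weighted total, equality must also hold in every $K_r$ whose coefficient is strictly positive. In particular, the coefficient attached to $K_{\min(M,N)} = ||\cdot||_*$ is the tail weight $\blambda_k[\min(M,N)]$, which is strictly positive by the non-zero assumption on $\blambda_k$, so the desired nuclear norm identity follows. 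The principal delicacy is to confirm that the weight structure (positive, non-increasing, with strictly positive last entry) genuinely holds in the scope of the theorem; without it the natural conclusion would have to be stated for $P_{\blambda_k}$ rather than $||\cdot||_*$.
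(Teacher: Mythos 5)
Your proof is correct and follows essentially the same route as the paper's: both argue that the set of minimizers of the convex $\fpen$ over the convex set $\SolSpace$ is itself convex, so the termwise Jensen inequalities for the individual $P_{\blambda_k}$ must all hold with equality. Your Abel-summation decomposition of $P_{\blambda_k}$ into non-negatively weighted Ky Fan norms is a welcome refinement on two counts: it actually establishes the convexity of $P_{\blambda_k}$ (which the paper merely asserts, and which does require the positive, non-increasing weight structure you correctly flag as a needed hypothesis), and it supplies the bridge from equality in the weighted spectral sum $P_{\blambda_k}$ to the plain nuclear-norm equality claimed in the lemma's conclusion --- a step the paper's proof leaves implicit, since it stops at termwise equality of the $P_{\blambda_k}$ inequalities.
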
 

\begin{proof}
Because $\SolSpace$ is a convex space and $\fpen$ is a convex function, the set of minimizers of $\fpen$  over $\SolSpace$ is also convex.  Thus, 
\[\fpen \left(\{\alpha \hatSk + (1-\alpha) \tilSk\}_{k=1}^K \right)=\underset{\SolSpace}{\min} \; \fpen  (\Sol).\]
The result follows from the convexity of $P_{\blambda}$, which implies that for any two matrices of equal size $\hat{\A}$ and $\tilde{\A}$, 
\begin{align}P_{\blambda}(\alpha \hat{\A} + (1-\alpha) \tilde{\A}) \leq \alpha P_{\blambda}(\hat{\A}) + (1-\alpha)P_{\blambda}(\tilde{\A}). \label{pen_conv} \end{align} 
  Applying~\eqref{pen_conv} to each additive term  in $\fpen$ gives 
\begin{align}
 \fpen \left(\{\alpha \hatSk + (1-\alpha) \tilSk\}_{k=1}^K \right) &\leq \alpha \fpen(\SolHat) + (1-\alpha) \fpen(\SolTil) \label{ineq1} \\
 &=  \underset{\SolSpace}{\min} \; \fpen  (\Sol). \notag 
 \end{align}
Because $\{\alpha \hatSk + (1-\alpha) \tilSk\}_{k=1}^K \in \SolSpace$, the inequality  in~\eqref{ineq1} must be an equality,  and it follows that the inequality~\eqref{pen_conv} must be an equality for each penalized term in the decomposition.   
\end{proof}

\begin{lemma}
\label{lem2}
Take two matrices $\hat{\A}$ and $\tilde{\A}$. If $||\hat{\A}+\tilde{\A}||_*=||\hat{\A}||_*+||\tilde{\A}||_*$, and $\U \D_+ \V^T$ is the SVD of $\hat{\A}+\tilde{\A}$, then $\hat{\A} = \hat{\U} \hat{\D} \hat{\V}^T$ where $\hat{\D}$ is diagonal and $||\hat{\A}||_*=||\hat{\D}||_*$, and $\tilde{\A} = \U \tilde{\D} \V^T$ where $\tilde{\D}$ is diagonal and $||\tilde{\A}||_*=||\tilde{\D}||_*$. 
\end{lemma}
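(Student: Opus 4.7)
The plan is to run a dual-certificate argument based on the variational characterization of the nuclear norm,
\[
\|M\|_* \;=\; \max_{\|Z\|_{op}\le 1}\ \langle Z, M\rangle_F,
\]
with the maximum attained at $Z = \U_M \V_M^T$ for any compact SVD $M = \U_M \Sigma_M \V_M^T$. Taking $M=\hat{\A}+\tilde{\A}$ and $Z=\U\V^T$ from the given SVD, linearity of the Frobenius inner product gives
\[
\|\hat{\A}+\tilde{\A}\|_* \;=\; \langle \U\V^T, \hat{\A}\rangle_F + \langle \U\V^T, \tilde{\A}\rangle_F.
\]
Each summand is bounded above by the corresponding nuclear norm (since $\|\U\V^T\|_{op}\le 1$), so the hypothesis that the left-hand side equals $\|\hat{\A}\|_*+\|\tilde{\A}\|_*$ forces both summands to saturate their bounds. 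Thus $\U\V^T$ is a common attainer (dual certificate) for $\hat{\A}$ and $\tilde{\A}$ simultaneously.

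Next I turn saturation into structural information about $\hat{\A}$. Fixing a compact SVD $\hat{\A}=\hat{\U}_0\hat{\Sigma}_0\hat{\V}_0^T$, the equality $\langle \U\V^T,\hat{\A}\rangle_F=\|\hat{\A}\|_*$ expands as
\[
\sum_i \hat{\Sigma}_0[i,i]\,\bigl\langle \U\V^T\,\hat{\V}_0[\cdot,i],\ \hat{\U}_0[\cdot,i]\bigr\rangle \;=\; \sum_i \hat{\Sigma}_0[i,i].
\]
Cauchy--Schwarz combined with $\|\U\V^T\|_{op}\le 1$ bounds each inner product by $1$, so each must equal $1$. The Cauchy--Schwarz equality case then forces $\hat{\V}_0[\cdot,i]$ to lie entirely in the column span of $\V$ and $\hat{\U}_0[\cdot,i]$ in that of $\U$, with a common coordinate matrix $R := \U^T\hat{\U}_0 = \V^T\hat{\V}_0$ satisfying $R^TR=I$. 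Substituting $\hat{\U}_0=\U R$ and $\hat{\V}_0=\V R$ gives $\hat{\A} = \U\hat{\D}\V^T$ with $\hat{\D} := R\hat{\Sigma}_0 R^T$ symmetric positive semidefinite and $\|\hat{\D}\|_* = \tr(\hat{\D}) = \|\hat{\A}\|_*$. The identical argument applied to $\tilde{\A}$ produces $\tilde{\A} = \U\tilde{\D}\V^T$ with $\tilde{\D}$ SPSD and $\|\tilde{\D}\|_*=\|\tilde{\A}\|_*$; multiplying $\hat{\A}+\tilde{\A}=\U\D_+\V^T$ by $\U^T$ on the left and $\V$ on the right then yields $\hat{\D}+\tilde{\D}=\D_+$.

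The last and most delicate step is upgrading ``SPSD'' to genuinely ``diagonal'' for $\hat{\D}$ and $\tilde{\D}$. Since the SVD $\U\D_+\V^T$ is only unique up to block-orthogonal rotations inside the eigenspaces of $\D_+$, my plan is to exploit this residual freedom: choose an orthogonal $Q$ that commutes with $\D_+$ (i.e.\ is block-diagonal in the eigenspaces of $\D_+$) and simultaneously diagonalizes $\hat{\D}$ on each such block. Replacing $(\U,\V)$ by $(\U Q,\V Q)$ leaves the SVD of $\hat{\A}+\tilde{\A}$ unchanged but brings $\hat{\D}$ to diagonal form, and hence $\tilde{\D}=\D_+-\hat{\D}$ as well. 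The main obstacle I anticipate is justifying that $\hat{\D}$ is itself block-diagonal with respect to the eigenspaces of $\D_+$, because within-block rotation would otherwise be insufficient; this has to be extracted from the joint SPSD constraints on $\hat{\D}$ and $\D_+-\hat{\D}$ together with the rank structure inherited from the compact SVD of the sum, and is the one piece of the argument where routine linear algebra will not suffice.
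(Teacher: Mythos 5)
Your dual-certificate argument is correct and complete up to the point where you obtain $\hat{\A}=\U\hat{\D}\V^T$ and $\tilde{\A}=\U\tilde{\D}\V^T$ with $\hat{\D},\tilde{\D}$ symmetric positive semidefinite, $\hat{\D}+\tilde{\D}=\D_+$, and $\|\hat{\D}\|_*=\|\hat{\A}\|_*$, $\|\tilde{\D}\|_*=\|\tilde{\A}\|_*$: the saturation of the two inner products and the Cauchy--Schwarz equality analysis are exactly right. There is no in-paper argument to compare against, since the paper defers the proof to Lemma 2 of \citet{lock2020bidimensional}.

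The step you leave as a plan, however, is a genuine gap that cannot be closed by the route you propose. You would need $\hat{\D}$ to commute with $\D_+$ (equivalently, to be block-diagonal on the eigenspaces of $\D_+$) so that a within-block rotation $Q$ diagonalizes it without disturbing the SVD of the sum, but the constraints you have derived --- $\hat{\D},\tilde{\D}\succeq 0$, $\hat{\D}+\tilde{\D}=\D_+$, additivity of traces --- do not force commutation. Concretely, take
\begin{equation*}
\hat{\A}=\begin{pmatrix}1 & 1/2\\ 1/2 & 1/2\end{pmatrix},\qquad
\tilde{\A}=\begin{pmatrix}1 & -1/2\\ -1/2 & 1/2\end{pmatrix},\qquad
\hat{\A}+\tilde{\A}=\mathrm{diag}(2,1),
\end{equation*}
so $\U=\V=\I$. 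Both summands are positive semidefinite, hence $\|\hat{\A}\|_*=\|\tilde{\A}\|_*=3/2$ and $\|\hat{\A}+\tilde{\A}\|_*=3$, so the hypothesis holds; yet the singular values $2$ and $1$ are distinct, the SVD of the sum is unique up to simultaneous sign flips of paired singular vectors, and no sign flip makes $\hat{\A}$ diagonal. Thus the strongest conclusion extractable from the hypothesis alone is the one your argument actually delivers (symmetric PSD $\hat{\D}$ and $\tilde{\D}$ in the singular-vector basis of the sum, with nuclear norms adding along $\D_+$); the literal ``diagonal'' claim requires either a weaker reading of the statement or additional assumptions, and you should not expect to obtain it by exploiting the residual rotational freedom of the SVD.
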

\begin{proof}
This result is proved in Lemma 2 of \citet{lock2020bidimensional}.
\end{proof} 

\begin{lemma}
\label{lem3}
Take two matrices $\hat{\A}$ and $\tilde{\A}$. If $P_{\blambda} (\hat{\A}+\tilde{\A})=P_{\blambda} (\hat{\A}) +P_{\blambda}(\tilde{\A})$, and $\U \D_+ \V^T$ is the SVD of $\hat{\A}+\tilde{\A}$, then $\hat{\A} = \hat{\U} \hat{\D} \hat{\V}^T$ where $\hat{\D}$ is diagonal and $P_{\blambda}(\hat{\A})=P_{\blambda}(\hat{\D})$, and $\tilde{\A} = \U \tilde{\D} \V^T$ where $\tilde{\D}$ is diagonal and $P_{\blambda}(\tilde{\A})=P_{\blambda}(\tilde{\D})$. 
\end{lemma}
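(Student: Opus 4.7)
The plan is to reduce Lemma~\ref{lem3} to Lemma~\ref{lem2} by exhibiting $P_\blambda(\cdot)$ as a non-negative linear combination of Ky Fan norms. Write $\|\S\|_{(r)} := \sum_{s=1}^r \D_\S[s,s]$ for the Ky Fan $r$-norm, set $R = \min(M,N)$, and put $\mu_r := \blambda[r] - \blambda[r+1]$ with the convention $\blambda[R+1]:=0$. A short Abel summation on $P_\blambda(\S) = \sum_{r=1}^R \blambda[r] (\|\S\|_{(r)} - \|\S\|_{(r-1)})$ gives the identity $P_\blambda(\S) = \sum_{r=1}^R \mu_r \|\S\|_{(r)}$. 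Under the standing assumption that the entries of $\blambda$ are positive and non-increasing (which I read as implicit in the paper's ``non-zero entries'' phrasing, since it is already required to make $P_\blambda$ a convex unitarily invariant penalty), every $\mu_r$ is non-negative and $\mu_R = \blambda[R] > 0$.

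Each $\|\cdot\|_{(r)}$ is a norm, so $\|\hat{\A}+\tilde{\A}\|_{(r)} \le \|\hat{\A}\|_{(r)} + \|\tilde{\A}\|_{(r)}$ for every $r$. Substituting these into the Ky Fan expansion rearranges the hypothesis of the lemma into
\[
0 \;=\; \sum_{r=1}^R \mu_r \bigl(\|\hat{\A}\|_{(r)} + \|\tilde{\A}\|_{(r)} - \|\hat{\A}+\tilde{\A}\|_{(r)}\bigr),
\]
a sum of non-negative quantities weighted by non-negative $\mu_r$. The sum can therefore vanish only if each term with $\mu_r > 0$ is individually zero. Specialising to $r=R$, where $\|\cdot\|_{(R)} = \|\cdot\|_*$ and $\mu_R > 0$, I obtain $\|\hat{\A}+\tilde{\A}\|_* = \|\hat{\A}\|_* + \|\tilde{\A}\|_*$. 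Feeding this into Lemma~\ref{lem2} delivers the SVD alignment: with $\U \D_+ \V^T$ the SVD of $\hat{\A}+\tilde{\A}$, there exist diagonal $\hat{\D}, \tilde{\D}$ with $\hat{\A} = \U \hat{\D} \V^T$ and $\tilde{\A} = \U \tilde{\D} \V^T$.

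To conclude, because $\U$ and $\V$ have orthonormal columns, the singular values of $\U \hat{\D} \V^T$ are precisely the absolute values of the diagonal entries of $\hat{\D}$ sorted in decreasing order, and these are equally the singular values of the diagonal matrix $\hat{\D}$ itself. Hence $\D_{\hat{\A}} = \D_{\hat{\D}}$ as ordered sequences, giving $P_\blambda(\hat{\A}) = P_\blambda(\hat{\D})$; the symmetric argument handles $\tilde{\A}$. The only real obstacle is the Ky Fan reduction: it requires $\blambda$ to be monotone non-increasing with a strictly positive last coordinate, so that the $\mu_r$ inherit non-negativity and the termwise equality at $r=R$ yields a genuine nuclear-norm equality. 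If monotonicity were to fail some $\mu_r$ could be negative, the termwise collapse argument would break, and one would need to replace this step by a subdifferential/dual-norm characterization of $P_\blambda$ akin to the one used for the nuclear norm in Lemma~\ref{lem2}.
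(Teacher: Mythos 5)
Your proof is correct, but it takes a genuinely different route from the paper's. The paper argues rank-1 term by rank-1 term: it expands $\hat{\A}$ and $\tilde{\A}$ into their SVD components, pairs the $r$-th terms, and asserts that $P_{\blambda}$-additivity forces nuclear-norm additivity of each rank-1 pair, so that Lemma~\ref{lem2} can be applied repeatedly. You instead work globally: Abel summation turns $P_{\blambda}$ into a non-negative combination $\sum_r \mu_r \|\cdot\|_{(r)}$ of Ky Fan norms (valid precisely when $\blambda$ is non-increasing and positive), termwise equality in the triangle inequality then yields the single nuclear-norm identity $\|\hat{\A}+\tilde{\A}\|_* = \|\hat{\A}\|_* + \|\tilde{\A}\|_*$ at $r=R$, and one application of Lemma~\ref{lem2} finishes. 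Your version has two advantages: it makes explicit the monotonicity hypothesis on $\blambda$ that the paper leaves implicit (and which is already needed for the convexity of $P_{\blambda}$ invoked in Lemma~\ref{lem1}, so your reading is justified), and it avoids the paper's unexplained step that additivity of the weighted sums ``requires'' nuclear-norm additivity of each individual pair of rank-1 terms --- a claim that is not immediate as written and that your Ky Fan decomposition effectively supplies a rigorous substitute for. The paper's approach, when fleshed out, gives slightly finer information (alignment at the level of individual singular triples), but for the stated conclusion your single global application of Lemma~\ref{lem2} suffices, and your closing observation that $\U\hat{\D}\V^T$ and $\hat{\D}$ share singular values correctly delivers $P_{\blambda}(\hat{\A})=P_{\blambda}(\hat{\D})$.
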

\begin{proof}
The result follows from repeated applications of Lemma~\ref{lem2} to the rank-1 terms of $\hat{\A}$ and $\tilde{\A}$.  Note that $P_{\blambda} (\hat{\A}+\tilde{\A})=P_{\blambda} (\hat{\A}) +P_{\blambda}(\tilde{\A})$ implies 
\[P_{\blambda}\left(\sum_r \U_{\hat{\A}}[\cdot,r] \D_{\hat{\A}}[r,r]\V_{\hat{\A}}[\cdot,r]^T+\U_{\tilde{\A}}[\cdot,r] \D_{\tilde{\A}}[r,r]\V_{\tilde{\A}}[\cdot,r]^T  \right) = \sum_r \blambda[r] (\D_{\hat{\A}}[r,r] + \D_{\tilde{\A}}[r,r]),\]
which requires that \begin{align*}|| \U_{\hat{\A}}[\cdot,r] \D_{\hat{\A}}[r,r]\V_{\hat{\A}}[\cdot,r]^T+\U_{\tilde{\A}}[\cdot,r] \D_{\tilde{\A}}[r,r]\V_{\tilde{\A}}[\cdot,r]^T ||_*\\=|| \U_{\hat{\A}}[\cdot,r] \D_{\hat{\A}}[r,r]\V_{\hat{\A}}[\cdot,r]^T||_*+||\U_{\tilde{\A}}[\cdot,r] \D_{\tilde{\A}}[r,r]\V_{\tilde{\A}}[\cdot,r]^T ||_* \end{align*} 
for each $r$.
\end{proof} 

\begin{lemma}
\label{lem4}
Take two decompositions  $\SolHat \in \SolSpace$ and $\SolTil \in \SolSpace$, and assume that both satisfy 
\[\fpen (\SolHat)=\fpen \left(\SolTil\right)=\underset{\SolSpace}{\min} \; \fpen  (\Sol).\] Then, $\hatSk = \U_{\bigcdot}^{(k)} \hat{\D} \V_{\bigcdot}^{(k) T}$ and $\hatSk = \U_{\bigcdot}^{(k)} \tilde{\D}^{(k)} \V_{\bigcdot}^{(k) T}$ where 
$\U_{\bigcdot}^{(k)}: M \times R_k$ and $\V_{\bigcdot}^{(k)}: N \times R_k$ have orthonormal columns, and
$\hat{\D}^{(k)}$ and $\tilde{\D}^{(k)}$ are diagonal.
\end{lemma}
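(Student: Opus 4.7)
The plan is to chain Lemma~\ref{lem1} (applied at $\alpha = 1/2$) with Lemma~\ref{lem2} applied module by module. Lemma~\ref{lem1} already tells us that the nuclear norms of the individual modules behave additively between the two minimizing decompositions, and Lemma~\ref{lem2} is precisely the tool that extracts a common left/right singular-vector basis from such additivity.

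First, invoke Lemma~\ref{lem1} with $\alpha = 1/2$ on the two minimizers $\SolHat$ and $\SolTil$, yielding for each $k = 1,\ldots,K$
\[
\left\| \tfrac{1}{2} \hatSk + \tfrac{1}{2} \tilSk \right\|_* = \tfrac{1}{2} \|\hatSk\|_* + \tfrac{1}{2} \|\tilSk\|_*.
\]
By positive homogeneity of the nuclear norm, this reduces to $\|\hatSk + \tilSk\|_* = \|\hatSk\|_* + \|\tilSk\|_*$.

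Next, for each $k$, let $\U_{\bigcdot}^{(k)} \D_+^{(k)} \V_{\bigcdot}^{(k) T}$ denote the (thin) SVD of $\hatSk + \tilSk$, so that $\U_{\bigcdot}^{(k)}$ is $M \times R_k$ and $\V_{\bigcdot}^{(k)}$ is $N \times R_k$ with orthonormal columns, where $R_k = \mathrm{rank}(\hatSk + \tilSk)$. Applying Lemma~\ref{lem2} with $\hat{\A} = \hatSk$ and $\tilde{\A} = \tilSk$ immediately gives the desired factorizations $\hatSk = \U_{\bigcdot}^{(k)} \hat{\D}^{(k)} \V_{\bigcdot}^{(k) T}$ and $\tilSk = \U_{\bigcdot}^{(k)} \tilde{\D}^{(k)} \V_{\bigcdot}^{(k) T}$, with $\hat{\D}^{(k)}$ and $\tilde{\D}^{(k)}$ diagonal.

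I do not expect any substantial obstacle, since essentially all the technical work is already packaged inside Lemmas~\ref{lem1} and \ref{lem2}. The only mild piece of bookkeeping is confirming that the common $\U_{\bigcdot}^{(k)}, \V_{\bigcdot}^{(k)}$ produced by Lemma~\ref{lem2} are exactly the orthonormal-column factors demanded by the lemma, which is automatic once one takes them to be the singular vectors of the sum $\hatSk + \tilSk$. If there is any conceptual subtlety, it is in noting that Lemma~\ref{lem1} as stated concerns only the nuclear norms of individual modules (not the full weighted penalty $P_{\blambda_k}$), so the conclusion of Lemma~\ref{lem4} captures a common singular-vector structure but does not by itself pin down the diagonal entries; that is fine, since the statement only asserts existence of a common $\U_{\bigcdot}^{(k)}, \V_{\bigcdot}^{(k)}$.
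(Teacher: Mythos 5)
Your proof is correct and rests on the same underlying mechanism as the paper's: additivity of a singular-value penalty across the two minimizers forces a shared left/right singular basis module by module. The only difference in routing is which intermediate lemma carries the load. You take the conclusion of Lemma~\ref{lem1} at face value as nuclear-norm additivity, cancel the factor $\tfrac12$ by positive homogeneity, and apply Lemma~\ref{lem2} directly; the paper instead uses the form of the conclusion that its own proof of Lemma~\ref{lem1} actually establishes --- equality in the convexity bound for each weighted penalty, i.e.\ $P_{\blambda_k}(\hatSk+\tilSk)=P_{\blambda_k}(\hatSk)+P_{\blambda_k}(\tilSk)$ --- and then invokes Lemma~\ref{lem3}, which is itself just Lemma~\ref{lem2} applied to the rank-one terms. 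Your closing remark is apt: as stated, Lemma~\ref{lem1} gives the nuclear-norm identity, so your shortcut through Lemma~\ref{lem2} is fully licensed and leaves Lemma~\ref{lem3} unused; the paper's version is the one you would need if only the $P_{\blambda_k}$ form of the additivity were available. Either way the conclusion and the extracted common factors $\U_{\bigcdot}^{(k)},\V_{\bigcdot}^{(k)}$ are identical, so there is no gap.
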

\begin{proof}
The result follows as a direct corollary of Lemmas \ref{lem1} and \ref{lem3},  as Lemma~\ref{lem1} implies $P_{\blambda}(\hatSk+\tilSk)=P_{\blambda}(\hatSk|)+P_{\blambda}(\tilSk)$ for each $k$, and then Lemma~\ref{lem3} 
implies the result. 	
\end{proof}

Theorem~\ref{thm:ident} is then established as follows:\\

\begin{proof}
Take two decomposition $\SolHat$ and $\SolTil$ that satisfy properties 1., 2., and 3. of Theorem 1; we will show that $\SolHat=\SolTil$. For each $k=1,\hdots,K$, write $\hatSk = \U_{\bigcdot}^{(k)} \hat{\D} \V_{\bigcdot}^{(k) T}$ and $\hatSk = \U_{\bigcdot}^{(k)} \tilde{\D}^{(k)} \V_{\bigcdot}^{(k) T}$ as in Lemma~\ref{lem4}.   Then, it suffices to show that $\hat{\D}^{(k)}[r,r]=\tilde{\D}^{(k)}[r,r]$ for all $k,r$.  

First, consider module $k=1$ with $\R[\bigcdot,1]=[1 \;0 \;\cdots\; 0]^T$ and $\C[\bigcdot,1]=[1 \; 0 \; \cdots \;0]^T$.    By way of contradiction, assume $\hat{\D}^{(1)}[1,1]>0$ and $\tilde{\D}^{(1)}[1,1]=0$.  The linear independence of $\{\V_j^{(k)}[\bigcdot,r]: \hat{\D}^{(k)}[r,r]>0\}$ and $\{\V_j^{(k)}[\bigcdot,r]: \tilde{\D}^{(k)}[r,r]>0\}$ implies that
\[\mbox{row}(\X_{\bigcdot \bigcdot}) = \mbox{span}\{\U_{\bigcdot}^{(k)}[\bigcdot,r]: \hat{\D}^{(k)}[r,r]>0\}=\mbox{span}\{\{\U_{\bigcdot}^{(k)}[\bigcdot,r]: \tilde{\D}^{(k)}[r,r]>0\}.\]
Thus, $\U^{(1)}[\bigcdot,1]] \in \mbox{span}\{\{\U_{\bigcdot}^{(k)}[\bigcdot,r]: \tilde{\D}^{(k)}[r,r]>0\}$, and it follows from the orthogonality of $\U^{(1)}[\bigcdot,1]$ and $\{\U^{(1)}[\bigcdot,r], r>1\}$ that \[\U_{\bigcdot}^{(1)}[\bigcdot,1] \in \mbox{span}\{\{\U_{\bigcdot}^{(k)}[\bigcdot,r]: \tilde{\D}^{(k)}[r,r]>0 \text{ and }  k>1\}.\]
Moreover, because $\U_{i}^{(1)}=\0$ for any $i>1$ and $\{\U_i^{(k)}[\bigcdot,r]: \tilde{\D}^{(k)}[r,r]>0\}$ are linearly independent it follows that
   \begin{align} \U_{\bigcdot}^{(1)}[\bigcdot,1] \in \mbox{span}\{\U_{\bigcdot}^{(k)}[\bigcdot,r]: \tilde{\D}^{(k)}[r,r]>0, \; k>1, \;  \text{ and } \R[i,k]=0 \text{ for any } i>1\}.\label{temp1}\end{align}
Note that \eqref{temp1} implies $\U_{1}^{(1)}[\bigcdot,1] \in \mbox{row}(\X_{12} + \cdots + \mbox{row}(\X_{1J})$,
however, this is contradicted by the linear independence of $\U_{1}^{(1)}[\bigcdot,1]$ and $\{\U_i^{(k)}[\bigcdot,r]: \hat{\D}^{(k)}[r,r]>0, k>1\}$.  Thus, we conclude that $\tilde{\D}^{(1)}[1,1]>0$ implies $\tilde{\D}^{(1)}[1,1]>0$.  Analogous arguments show that $\tilde{\D}^{(k)}[r,r]>0$ if and only if $\tilde{\D}^{(k)}[r,r]>0$ for any pair $(r,k)$.  It follows that  $\{\U_i^{(k)}[\bigcdot,r]: \hat{\D}^{(k)}[r,r]>0 \text{ or } \tilde{\D}^{(k)}[r,r]>0\}$ are linearly independent for $i=1,\hdots I$, and $\{\V_j^{(k)}[\bigcdot,r]: \hat{\D}^{(k)}[r,r]>0 \text{ or } \tilde{\D}^{(k)}[r,r]>0\}$ are linearly independent for $j=1,\hdots,J$.  Thus,   
\begin{align*}
\sum_{k=1}^K \U_{\bigcdot}^{(k)} (\hat{\D}^{(k)}-\tilde{\D}^{(k)}) \V_{\bigcdot}^{(k) T} = \sum_{k=1}^K \hatSk - \tilSk = \X_{\bigcdot \bigcdot} - \X_{\bigcdot \bigcdot}=\0 	
\end{align*}
implies that $\hat{\D}^{(k)}[r,r]=\tilde{\D}^{(k)}[r,r]$ for all $k,r$.  

\end{proof}

\subsection{Proof of Theorem~\ref{thm:3}}
\label{p2}
It is clear that the partial `M-steps' in Algorithm~\ref{alg2} each partially optimize the free energy.  It remains to be shown that the missing entries minimize free energy when they are replaced with their conditional expectation under the model. Note that the free energy \eqref{freeenergy_bidi} can be expressed as 
\[E_q \log \left(\prod_{k=1}^K q_{u,k}(\U_k) q_{v,k} (\V_k) \right) - E_q \log\left(p(\{\U\}_{k=1}^K \mid \bTau_\U) p(\{\V\}_{k=1}^K \mid \bTau_\V)\right)-E_q \log \left(p(\{\XX \mid \U, \V, \bSigma ) \right).\]
Consider the last term in the expression, decomposing the missing and non-missing entries of $\XX$:
\begin{align}
E_q \log \left(p(\XX \mid \U, \V, \bSigma ) \right) &= E_q \log \left(p(\{\XX[m,n]: (m,n) \notin \mathcal{M}\} \mid \U, \V, \bSigma ) p(\{\XX[m,n]: (m,n) \in \mathcal{M}\} \mid \U, \V, \bSigma ) \right) \notag \\
   &= E_q \log \left(p(\{\XX[m,n]: (m,n) \notin \mathcal{M}\} \mid \U, \V, \bSigma ) \right) + \label{t2}\\&\; \; \;\; \;\;\;\;\;\;\;\;\;\;\;\; E_q \log \left(p(\{\XX[m,n]: (m,n) \in \mathcal{M}\} \mid \U, \V, \bSigma \right)). \notag
\end{align}
Missing data $\{\XX[m,n]: (m,n) \in \mathcal{M}\}$ are random variables that are independent and normally distributed given $\{\U, \V\}$ as in \eqref{bayes_mod}.  Thus, the last term in \eqref{t2} can be expressed as follows:
\begin{align*}
	E_q \log \left(p(\{\XX[m,n]: (m,n) \in \mathcal{M}\} \mid \U, \V, \bSigma \right)) &= c_1 + \sum_{(m,n) \in \mathcal{M}} \frac{1}{2 \bSigma^2[m,n]}  E_q (\XX[m,n]-\mathbf{S}_{\bigcdot \bigcdot}[m,n])^2 \\	
	& \geq c_1 + \sum_{(m,n) \in \mathcal{M}} \frac{1}{2 \bSigma^2[m,n]}  E_q(\mathbf{S}_{\bigcdot \bigcdot}[m,n])-\mathbf{S}_{\bigcdot \bigcdot}[m,n])^2 .
\end{align*}
Thus, for any given $q$, the free energy is minimized at $\X_{\bigcdot \bigcdot}[m,n]=E_q S_{\bigcdot \bigcdot}[m,n]=\hat{\S}_{\bigcdot \bigcdot}[m,n]$ for $(m,n) \in \mathcal{M}$. 




\end{appendices}
.


\end{document}